\newtheorem{claim}{Claim}
\newcolumntype{?}{!{\vrule width 1pt}}
\newcommand{\commenta}[1]
\newcommand{\para}[1]{\vspace{1pt} \noindent\textbf{#1} \hspace{1pt}}
\newcommand{\etal}{\mbox{\it{et al.}}}
\newcommand{\ie}{\mbox{\it{i.e.,\ }}}
\newcommand{\eg}{\mbox{\it{e.g.,\ }}}
\DeclareMathOperator*{\argmax}{arg\,max}
\newcommand{\s}{\mathit{s}}
\newcommand{\ia}{\mathit{a}}
\def\Snospace~{Section }
\theoremstyle{plain}
\newtheorem{theorem}{Theorem}[section]
\newtheorem{proposition}[theorem]{Proposition}
\newtheorem{lemma}[theorem]{Lemma}
\theoremstyle{definition}
\newtheorem{assumption}[theorem]{Assumption}
\theoremstyle{remark}
\newcommand{\sref}[2]{\hyperref[#2]{#1 \ref{#2}}}
\begin{document}

\twocolumn[
\icmltitle{RICE: Breaking Through the Training Bottlenecks of Reinforcement Learning with Explanation}



\icmlsetsymbol{equal}{*}

\begin{icmlauthorlist}
\icmlauthor{Zelei Cheng}{equal,nu}
\icmlauthor{Xian Wu}{equal,nu}
\icmlauthor{Jiahao Yu}{nu}
\icmlauthor{Sabrina Yang}{sch}
\icmlauthor{Gang Wang}{uiuc}
\icmlauthor{Xinyu Xing}{nu}
\end{icmlauthorlist}

\icmlaffiliation{nu}{Department of Computer Science, Northwestern University, Evanston, Illinois, USA}
\icmlaffiliation{sch}{Presentation High School, San Jose, California, USA}
\icmlaffiliation{uiuc}{Department of Computer Science,  University of Illinois at Urbana-Champaign, Urbana, Illinois, USA}

\icmlcorrespondingauthor{Xinyu Xing}{xinyu.xing@northwestern.edu}

\icmlkeywords{Machine Learning, ICML}

\vskip 0.3in
]



\printAffiliationsAndNotice{\icmlEqualContribution} 
\begin{abstract}
Deep reinforcement learning (DRL) is playing an increasingly important role in real-world applications. However, obtaining an optimally performing DRL agent for complex tasks, especially with sparse rewards, remains a significant challenge. The training of a DRL agent can be often trapped in a bottleneck without further progress. In this paper, we propose {\tt RICE}, an innovative refining scheme for reinforcement learning that incorporates explanation methods to break through the training bottlenecks. The high-level idea of {\tt RICE} is to construct a new initial state distribution that combines both the default initial states and critical states identified through explanation methods, thereby encouraging the agent to explore from the mixed initial states. Through careful design, we can theoretically guarantee that our refining scheme has a tighter sub-optimality bound.  We evaluate {\tt RICE} in various popular RL environments and real-world applications. The results demonstrate that {\tt RICE} significantly outperforms existing refining schemes in enhancing agent performance.

\end{abstract}

\section{Introduction}
\label{sec:intro}


Deep reinforcement learning (DRL) has shown promising performance in various applications ranging from playing simulated games~\cite{todorov2012mujoco, mnih2013playing, oh2016control, cai2023imitation} to completing real-world tasks such as navigating autonomous vehicles and performing cybersecurity attacks and defenses~\cite{bar2022werlman, vyas2023automated, anderson2018learning, wang2023autonomous}. 
However, training an optimal DRL agent for complex tasks, particularly in environments with sparse rewards, presents a significant challenge. Often cases, the training of a DRL agent can hit a bottleneck without making further process: its sub-optimal performance becomes evident when it makes common mistakes or falls short of achieving the final goals.


When the DRL agent hits its training bottleneck, a refinement strategy can be considered, especially if the agent is already locally optimal. To refine the locally optimal DRL agent, one method is to analyze its behavior and patch the errors it made. 
A recent work~\cite{cheng2023statemask} proposes StateMask to identify critical states of the agent using an explanation method. One utility of StateMask is patching the agent's error, which fine-tunes the DRL agent starting from the identified critical states (denoted as ``StateMask-R''). 
However, such an approach suffers from two drawbacks. On the one hand, initializing solely from critical states will hurt the diversity of initial states, which can cause overfitting (see \autoref{appendix:malware}). On the other hand, fine-tuning alone cannot help the DRL agent jump out of the local optima. These observations drive us to rethink how to design a proper initial distribution and apply exploration-based techniques to patch previous errors.

Another reason behind the training bottleneck can be the poor choice of the training algorithm. Naturally, to improve performance, the developer needs to select another DRL training algorithm to re-train the DRL agent. However, for complex DRL tasks, re-training the agent {\em from scratch} is too costly. 
For instance, for AlphaStar~\cite{vinyals2019grandmaster} to attain grandmaster-level proficiency in StarCraft, its training period exceeds one month with TPUs.
Retraining an agent of this level can incur a cost amounting to millions of dollars~\cite{agarwal2022reincarnating}. 
Therefore, existing research has investigated the reuse of previous DRL training (as prior knowledge) to facilitate re-training~\cite{ho2016generative, fu2018learning, cai2022seeing}. The most recent example is Jump-Start Reinforcement Learning (JSRL) proposed by \citet{uchendu2023jump} which leverages a pre-trained policy to design a curriculum to guide the training of a self-improving exploration policy. However, their selection of exploration frontiers in the curriculum is random, which cannot guarantee that the exploration frontiers have positive returns. This motivates us to incorporate explanation methods to scrutinize the pre-trained policy and design more effective exploration frontiers.

In this work, we propose {\tt RICE}\footnote{The source code of {\tt RICE} can be found in \url{https://github.com/chengzelei/RICE}}, a {\bf R}efining scheme for Re{\bf I}nfor{\bf C}ement learning with {\bf E}xplanation. We first leverage a state-of-the-art explanation method to derive a step-level explanation for the pre-trained DRL policy. The explanation method identifies the most critical states (\ie steps that contribute the most to the final reward of a trajectory), which will be used to construct the exploration frontiers. Based on the explanation results, we construct a mixed initial state distribution that combines the default initial states and the identified critical states to prevent the overfitting problem. By forcing the agent to revisit these exploration frontiers, we further incentivize the agent to explore starting from the frontiers. Through exploration, the agent is able to expand state coverage, and therefore more effectively break through the bottlenecks of reinforcement learning training. Our theoretical analysis shows that this method achieves a tighter sub-optimality bound by utilizing this mixed initial distribution (see \autoref{subsec:theory}). 

In addition, we introduce key improvements to the state-of-the-art explanation method StateMask~\cite{cheng2023statemask} to better facilitate our refining scheme. We reformulate the objective function and add a new reward bonus for encouraging blinding when training---this significantly simplifies the implementation without sacrificing the theoretical guarantee.

\para{Evaluation and Findings.}
We evaluate the performance of {\tt RICE} using four MuJoCo games and four DRL-based real-world applications,
including cryptocurrency mining~\cite{bar2022werlman}, autonomous cyber defense (Cage Challenge 2)~\cite{cage_challenge_2_announcement}, autonomous driving~\cite{li2022metadrive}, and malware mutation~\cite{raff2017malware}. 
We show that the explanation derived from our new design demonstrates similar fidelity to the state-of-the-art technique StateMask~\cite{cheng2023statemask} with {\em significantly improved training efficiency}. With the explanation results, we show our refining method can produce higher performance improvements for the pre-trained DRL agent, in comparison with existing approaches including JSRL~\cite{uchendu2023jump} and the original refining method from StateMask~\cite{cheng2023statemask}. 


In summary, our paper has the following contributions:
\begin{itemize}
    \item We develop a refining strategy to break through the bottlenecks of reinforcement learning training with an explanation (which is backed up by a theoretical analysis). We show our refining method performs better than those informed by random explanation. 
    \item We propose an alternative design of StateMask to explain the agent's policy in DRL-based applications. Experiments show that our explanation has comparable fidelity with StateMask while improving efficiency.
    \item With extensive evaluations and case studies, we illustrate the benefits of using {\tt RICE} to improve a pre-trained policy. 
\end{itemize}

\section{Related Work}
\subsection{Explanation-based Refining}
\label{sec:literature1}
Recently, there has been some work that leverages the DRL explanation to improve the agent's performance. These explanations can be derived from either human feedback or automated processes.  \citet{guan2021widening, van2022correct} propose to utilize human feedback to correct the agent's failures. More specifically, when the agent fails, humans (can be non-experts) are involved to point out how to avoid such a failure (\ie what action should be done instead, and what action should be forbidden). Based on human feedback, the DRL agent gets refined by taking human-advised action in those important time steps and finally obtains the corrected policy. The downside is that it relies on humans to identify critical steps and craft rules for alternative actions. This can be challenging for a large action space, and the retraining process is ad-hoc and time-consuming. \citet{cheng2023statemask, yu2023airs} propose to use step-level DRL explanation methods to automatically identify critical time steps and refine the agent accordingly. It initiates the refining process by resetting the environment to the critical states and subsequently resumes training the DRL agents from these critical states. Empirically, we observe that this refining strategy can easily lead to overfitting (see \autoref{appendix:malware}). Instead, we propose a novel refining strategy with theoretical guarantees to improve the agent's performance.

\subsection{Leveraging Existing Policy}
The utilization of existing policies to initialize RL and enhance exploration has been explored in previous literature. 
Some studies propose to ``roll-in'' with an existing policy for better exploration, as demonstrated in works \cite{agarwal2020pc, li2023understanding}. Similar to our approach, JSRL \cite{uchendu2023jump} incorporates a guide policy for roll-in, followed by a self-improving exploration policy. Technically, JSRL relies on a curriculum for the gradual update of the exploration frontier. However, the curriculum may not be able to truly reflect the key reasons why the guide policy succeeds or fails. Therefore, we propose to leverage the explanation method to automatically identify crucial states, facilitating the rollout of the policy by integrating these identified states with the default initial states. In \autoref{sec:eval}, we empirically demonstrate that JSRL performs poorly in our selected games. \citet{chang2023learning} propose PPO++ that reset the environment to a mixture of the default initial states and the visited states of a guide policy (\ie a pre-trained policy). It can be viewed as a special case in our framework, \ie constructing a mixed initial distribution with a random explanation. However, we claim that not all visited states of a pre-trained policy are informative and our theoretical analysis and experiments both show that {\tt RICE} based on our explanation method outperforms the refining method based on a random explanation.

\section{Proposed Technique}
\label{sec:bg}

\begin{figure*}[t]
    \centering
    \includegraphics[width=1.0\linewidth]{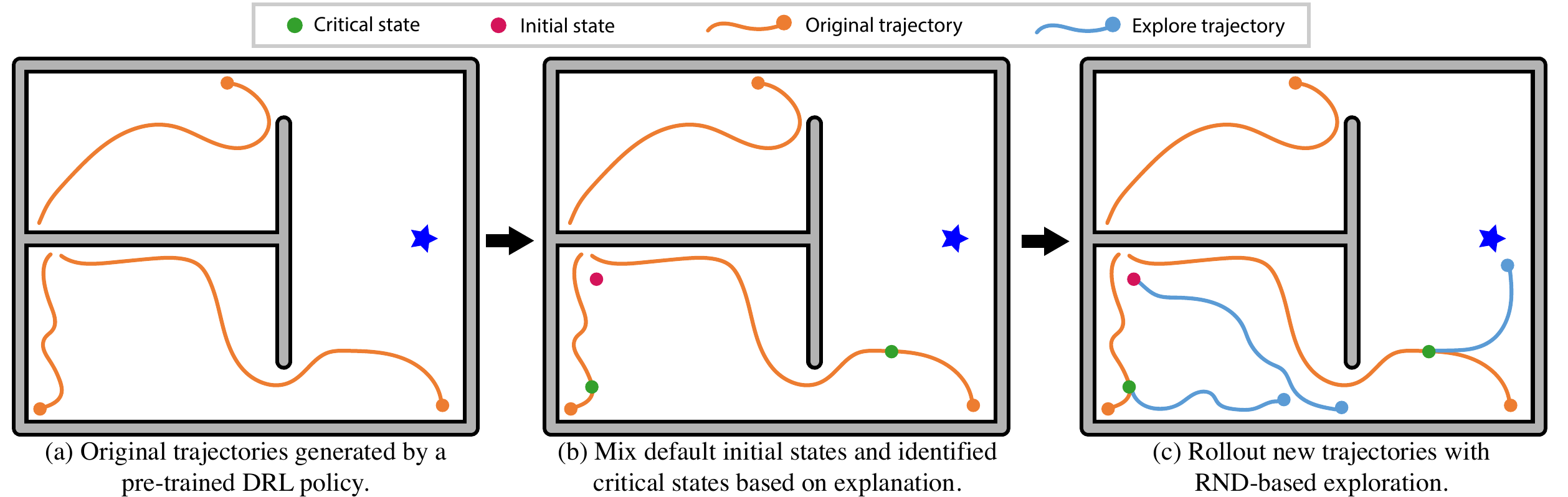}
    \caption{Given a pre-trained DRL policy that is not fully optimal (a), we propose the {\tt RICE} algorithm that resets the RL agent to specific visited states (a mixture of default initial states and identified critical states) (b), followed by an exploration step initiated from these chosen states (c).
    }   
    \label{fig:framework}
    \vspace{-10pt}
\end{figure*}

\subsection{Problem Setup and Assumption}
We model the problem as a Markov Decision Process (MDP), which is defined as a tuple
$\langle \mathcal{S}$, $\mathcal{A}$, $\mathcal{P}$, $\rho$, $\mathcal{R}$, $\gamma \rangle$. In this tuple, $\mathcal{S}$ and $\mathcal{A}$ are the state and action set, where each $\s_{t}$ and $\ia_{t}$ represents the state and action of the agent at time $t$. $P: \mathcal{S} \times  \mathcal{A} \rightarrow \Delta(\mathcal{S})$ is the state transition function,  $\mathcal{R}: \mathcal{S}  \times \mathcal{A}  \rightarrow \mathbb{R}$ is the reward function. $\gamma \in (0, 1)$ is the discount factor. For a policy $\pi(a|s)$: $\mathcal{S} \rightarrow \mathcal{A}$, the value function and $Q$-function is defined as $V^{\pi}(s) = \mathbb{E}_{\pi}\left[ \sum_{t=0}^{\infty} \gamma^t R(s_t, a_t) \mid s_0 = s \right]$ and $Q^{\pi}(s, a) = \mathbb{E}_{\pi}\left[ \sum_{t=0}^{\infty} \gamma^t R(s_t, a_t) \mid s_0 = s, a_0 = a \right]$. The advantage function for the policy $\pi$ is denoted as $A^{\pi}(s, a) = Q^{\pi}(s, a) - V^{\pi}(s)$. We assume the initial state distribution is given by $\rho$: $s_0 \sim \rho$. The goal of RL is to find an optimal policy $\pi^*$ that maximizes its expected total reward : $\pi^* = \argmax_{\pi} \mathbb{E}_{s\sim \rho}\left[V^{\pi}(s)\right]$. Besides, we also introduce the state occupancy distribution and the state-action occupancy measure for $\pi$, denoted as $d_{\rho}^\pi(s)=(1-\gamma) \sum_{t=0}^{\infty} \gamma^t \operatorname{Pr}^{\pi}\left(s_t=s \mid s_0 \sim \rho\right)$ and $d_{\rho}^\pi(s,a)=d_{\rho}^\pi(s) \pi(a|s)$.

In our setting, we have a pre-trained policy denoted as $\pi$, which may be sub-optimal. Our objective is to break through the training bottlenecks of the pre-trained policy with an explanation. Rather than re-training from scratch, we propose to utilize explanation to take full advantage of the guidance of the pre-trained policy $\pi$. Importantly, we do not assume knowledge of the original training algorithm used for policy $\pi$. And we make the following assumptions about the quality of $\pi$.
\begin{assumption}
\label{assumption:random}
 Given a random policy $\pi^{r}$, we have $\mathbb{E}_{a \sim \pi^{r}}[A^{\pi}(s, a)] \leq 0$, $\forall s$.
 \vspace{-3pt}
\end{assumption}

Intuitively, the above assumption implies that taking an action based on a random policy $\pi^r$ will provide a lower advantage than taking actions based on the policy $\pi$. This is a reasonable assumption since $\pi$ is a pre-trained policy, thus it would perform much better than an untrained (\ie random) policy. 

\begin{assumption}
\label{assumption:mismatch}
The pre-trained policy $\pi$ cover the states visited by the optimal policy $\pi^{*}$: $\left\|\frac{d_\rho^{\pi^*}}{d_\rho^{\pi}}\right\|_{\infty} \le C$, where $C$ is a constant.
 \vspace{-3pt}
\end{assumption}

In other words, \sref{Assumption}{assumption:mismatch} requires that the pre-trained policy visits all good states in the state space. Note that it is a standard assumption in the online policy gradient learning ~\cite{agarwal2021theory, uchendu2023jump, li2023understanding} and is much weaker than the \textit{single policy concentrateability coefficient} assumption~\cite{rashidinejad2021bridging, xie2021policy}, which requires the pre-trained policy visits all good state-action pairs. The ratio in \sref{Assumption}{assumption:mismatch} is also referred to as the distribution mismatch coefficient.

\subsection{Technical Overview}
Recall our goal is to refine the pre-trained DRL agent to break through the training bottlenecks. At a high level, the {\tt RICE} algorithm integrates a roll-in step, where the RL agent is reset to specific visited states, followed by an exploration step initiated from these chosen states. During the roll-in step, we draw inspiration from established RL-explanation methods ~\cite{ puri2019explain, guo2021edge, cheng2023statemask} to identify critical states, referred to as exploration frontiers, within the given policy $\pi$. As depicted in \autoref{fig:framework}, when presented with a trajectory sampled from the policy $\pi$, we employ a step-level explanation method -- StateMask ~\cite{cheng2023statemask} to identify the most crucial time steps influencing the final rewards in this trajectory. Subsequently, we guide the RL agent to revisit these selected states. The rationale behind revisiting these states lies in their ability to offer an expanded initial state distribution compared to $\rho$, thereby enabling the agent to explore diverse and relevant states it might otherwise neglect. Additionally, we introduce a mixing of these selected states with the initial states sampled from $\rho$. This mixing approach serves the purpose of preventing the agent from overfitting to specific states. In \autoref{subsec:theory}, we theoretically show that {\tt RICE} achieves a tighter regret bound through the utilization of this mixed initial distribution.

Then, we propose an exploration-based method to further enhance the DRL agent's performance. The high-level idea is to incentivize the agent to explore when initiating actions from these frontiers. Intuitively, the pre-trained policy $\pi$ might converge to a local optimal, as shown in \autoref{fig:framework}. Through exploration, we aim to expand state coverage by rewarding the agent for visiting novel states, thereby increasing the likelihood of successfully completing the task. Specifically, we utilize the Proximal Policy Optimization (PPO) algorithm ~\cite{schulman2017proximal} for refining the DRL agent, leveraging the monotonicity of PPO.

\subsection{Technique Detail}
\label{sec:tech}

\para{Step-level Explanation.}
We leverage a state-of-the-art explanation method StateMask~\cite{cheng2023statemask}. At a high level, StateMask parameterizes the importance of the target agent's current time step as a neural network model (\ie mask network). This neural network takes the current state as input and then outputs this step's importance score with respect to the agent's final reward. To do so, StateMask learns a policy to “blind” the target agent at certain steps without changing the agent's final reward. Specifically, for an input state $s_{t}$, the mask net outputs a binary action $a_{t}^{m}$ of either ``zero'' or ``one'', and the target agent will sample the action $a_{t}$ from its policy. 
The final action is determined by the following equation
\vspace{-5pt}
\begin{equation}
a_t \odot a_t^m = \left \{
\begin{aligned}
&a_t, & \text{if } a_t^m=0 \, ,\\
&a_{\text{random}} & \text{if } a_t^m=1 \, ,
\end{aligned}
\right.    
\label{mask_operator}
 \vspace{-5pt}
\end{equation}
The mask net is then trained to minimize the following objective function:
\begin{equation}
    \begin{aligned}
    J(\theta) = \text{min } |\eta(\pi) - \eta(\bar{\pi})| \, ,
    \end{aligned}
    \label{eq:objective}
\vspace{-5pt}
\end{equation}
where $\pi$ denotes the policy of the target agent (\ie our pre-trained policy), $\bar{\pi}$ denotes the policy of the perturbed agent (\ie integrating the random policy and the target agent $\pi$ via the mask network $\tilde{\pi}$), $\eta(\cdot)$ is the expected total reward of an agent by following a certain policy. To solve the Eqn.~\eqref{eq:objective} with monotonicaly guarantee, StateMask carefully designs a surrogate function and utilize the prime-dual methods to optimize the $\tilde{\pi}$. However, we can optimize the learning process of mask net within our setting to enhance simplicity. Specifically, we have the following theorem
\begin{theorem}
Under \sref{Assumption}{assumption:random}, we have $\eta(\bar{\pi})$ upper-bounded by $\eta(\pi)$: $\eta(\bar{\pi}) \leq \eta(\pi)$.
\label{theorem:upper_bound}
\end{theorem}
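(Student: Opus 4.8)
The plan is to relate $\eta(\bar\pi)$ and $\eta(\pi)$ through the standard performance-difference lemma, which expresses the gap in expected total reward between two policies as a discounted sum of advantages. Concretely, I would write
\begin{equation}
\eta(\bar\pi) - \eta(\pi) = \frac{1}{1-\gamma}\,\mathbb{E}_{s \sim d_\rho^{\bar\pi}}\,\mathbb{E}_{a \sim \bar\pi(\cdot|s)}\bigl[A^{\pi}(s,a)\bigr].
\end{equation}
The goal then reduces to showing the inner expectation $\mathbb{E}_{a \sim \bar\pi(\cdot|s)}[A^\pi(s,a)]$ is nonpositive for every state $s$; since the outer expectation is over a probability distribution and $\frac{1}{1-\gamma} > 0$, nonpositivity of the integrand gives $\eta(\bar\pi) \le \eta(\pi)$.

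Next I would unpack the structure of $\bar\pi$. By construction (Eqn.~\eqref{mask_operator}), at each state $s$ the masked policy either follows the target policy $\pi$ (when the mask net outputs $a_t^m = 0$) or samples from the random policy $\pi^r$ (when $a_t^m = 1$). Writing $p(s) \in [0,1]$ for the probability the mask net outputs $1$ at state $s$, we have $\bar\pi(\cdot|s) = (1-p(s))\,\pi(\cdot|s) + p(s)\,\pi^r(\cdot|s)$. Therefore
\begin{equation}
\mathbb{E}_{a \sim \bar\pi(\cdot|s)}[A^\pi(s,a)] = (1-p(s))\,\mathbb{E}_{a \sim \pi(\cdot|s)}[A^\pi(s,a)] + p(s)\,\mathbb{E}_{a \sim \pi^r(\cdot|s)}[A^\pi(s,a)].
\end{equation}
The first term vanishes identically because $\mathbb{E}_{a\sim\pi}[A^\pi(s,a)] = \mathbb{E}_{a\sim\pi}[Q^\pi(s,a)] - V^\pi(s) = 0$ by the definition of the value function. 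The second term is $\le 0$ by \sref{Assumption}{assumption:random}, and $p(s) \ge 0$, so the whole expression is nonpositive, completing the argument.

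The main obstacle, and the place where care is needed, is justifying the particular form of $\bar\pi$ as a state-dependent mixture of $\pi$ and $\pi^r$ and making sure the application of \sref{Assumption}{assumption:random} is legitimate at the states actually visited under $d_\rho^{\bar\pi}$ — the assumption is stated for all $s$, so this is fine, but one should state it cleanly. A secondary subtlety is the precise statement of the performance-difference lemma with the $\frac{1}{1-\gamma}$ normalization and the occupancy measure $d_\rho^{\bar\pi}$ as already defined in the problem setup; I would simply cite it (e.g.\ \citet{kakade2002approximately} / the policy-gradient literature already referenced) rather than reprove it. Everything else is a one-line substitution, so the theorem follows quickly once the mixture decomposition is in place.
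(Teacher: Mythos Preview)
Your proposal is correct and follows essentially the same argument as the paper's proof: apply the performance-difference lemma, decompose $\bar\pi$ as a state-dependent mixture of $\pi$ and $\pi^r$, use $\mathbb{E}_{a\sim\pi}[A^\pi(s,a)]=0$ to kill the first term, and invoke \sref{Assumption}{assumption:random} on the second. The only cosmetic difference is notation (the paper writes $\xi(s)$ for the probability the mask outputs $0$, whereas you write $p(s)$ for the probability it outputs $1$).
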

The proof of the theorem can be found in \sref{Appendix}{appendix:proof_theorem33}. Leveraging this theorem, we can transform the objective function to $J(\theta) = \text{max } \eta(\bar{\pi})$. With this reformulation, we can utilize the vanilla PPO algorithm to train the state mask without sacrificing the theoretical guarantee. However, na\"ively maximizing the expected total reward may introduce a trivial solution to the problem which is to {\em not blind the target agent at all} (always outputs ``0''). To tackle this problem, we add an additional reward by giving an extra bonus when the mask net outputs ``1''. The new reward can be written as $R'(s_t, a_t) = R(s_t, a_t) + \alpha a_t^m$ where $\alpha$ is a hyper-parameter. We present the learning process of the mask network in \autoref{alg:masknet}. By applying this resolved mask to each state, we will be able to assess the state importance (\ie the probability of mask network outputting ``0'') at any time step.

\begin{algorithm}[t]
\caption{Training the Mask Network.}
\label{alg:masknet}
\begin{algorithmic}
\footnotesize
\STATE {\bfseries Input:} Target agent's policy $\pi$
\STATE {\bfseries Output:} Mask network $\tilde{\pi}_{\theta}$ 
\STATE {\bfseries Initialization:} Initialize the weights $\theta$ for the mask net $\tilde{\pi}_{\theta}$
\STATE $\theta_{old} \leftarrow \theta$
\FOR{iteration=1, 2, \dots}
\STATE Set the initial state $s_0 \sim \rho$
\STATE$\mathcal{D} \leftarrow \emptyset$ 
\FOR{t=0 to T}
\STATE Sample $a_t \sim \pi(a_t | s_t)$
\STATE Sample $a_t^m \sim \tilde{\pi}_{\theta_{old}}(a_t^m | s_t)$
\STATE Compute the actual taken action $a \leftarrow a_t \odot a_t^m$ 
\STATE $(s_{t+1}, R_t^{\prime}) \leftarrow {\tt env.step}  (a)$ and record $(s_t, s_{t+1}, a_t^m, R_t^{\prime})$ in $\mathcal{D}$
\ENDFOR
\STATE update $\theta_{old} \leftarrow \theta$ using $\mathcal{D}$ by PPO algorithm
\ENDFOR
\end{algorithmic}
\end{algorithm}

\phantomsection
\label{cite_this}
\para{Constructing Mixed Initial State Distribution.} 
With the state mask $\tilde{\pi}$, we construct a mixed initial state distribution to expand the coverage of the state space. Initially, we randomly sample a trajectory by executing the pre-trained policy $\pi$. Subsequently, the state mask is applied to pinpoint the most important state within the episode $\tau$ by assessing the significance of each state. The resulting distribution of these identified critical states is denoted as $d_{\rho}^{\hat{\pi}}(s)$. Indeed, in \autoref{subsec:theory}, we demonstrate that this re-weighting-like sampling is equivalent to sampling the state from a better policy $\hat{\pi}$. We then set the initial distribution $\mu$ as a mixture of the selected important states distribution $d_{\rho}^{\hat{\pi}}(s)$ and the original initial distribution of interest $\rho$: $\mu(s) = \beta d_{\rho}^{\hat{\pi}}(s) + (1 - \beta)\rho(s)$, where $\beta$ is a hyper-parameter.

\para{Exploration with Random Network Distillation.}
 Starting from the new initial state distribution, we continue training the DRL agent while encouraging the agent to do exploration. In contrast to goal-conditional RL ~\cite{ren2019exploration, ecoffet2019go}, which typically involve random exploration from identified frontiers, we advocate for the RL agent to explore novel states to increase the state coverage. Motivated by this, we adopt Random Network Distillation (RND)~\cite{burda2018exploration} which is proved to be an effective exploration bonus, especially in large and continuous state spaces where count-based bonuses~\cite{bellemare2016unifying, ostrovski2017count} can be hard to extend. Specifically, we directly utilize the PPO algorithm to update the policy $\pi$, except that we add the intrinsic reward to the task reward, that is, we optimize $R'(s_t, a_t) = R(s_t, a_t) + \lambda |f(s_{t+1}) - \hat{f}(s_{t+1})|^2$, where $\lambda$ controls the trade-off between the task reward and exploration bonus. Along with the policy parameters, the RND predictor network $\hat{f}$ is updated to regress to the target network $f$. Note that, as the state coverage increases, RND bonuses decay to zero and a performed policy is recovered. We present our proposed refining method in \autoref{alg:retrain}.

\begin{algorithm}[t]
\caption{Refining the DRL Agent.}
\label{alg:retrain}
\footnotesize
\begin{algorithmic}
\STATE {\bfseries Input:} Pre-trained policy $\pi$, corresponding state mask $\tilde{\pi}$, default initial state distribution $\rho$, reset probability threshold $p$
\STATE {\bfseries Output:} The agent's policy after refining $\pi^{\prime}$
\FOR{iteration=1, 2, \dots}
\STATE $\mathcal{D} \leftarrow \emptyset$
\STATE RAND\_NUM $\leftarrow$ RAND(0,1)\\
\IF{RAND\_NUM $<$ $p$}
\STATE Run $\pi$ to obtain a trajectory $\tau$ of length $K$ \\
\STATE Identify the most critical state $s_t$ in $\tau$ via state mask $\tilde{\pi}$\\
\STATE Set the initial state $s_0 \leftarrow s_t$\\
\ELSE
\STATE Set the initial state $s_0 \sim \rho$ \\
\ENDIF
\FOR{t=0 to $T$}
\STATE Sample $a_t \sim \pi(a_t | s_t)$\\
\STATE $(s_{t+1}, R_t) \leftarrow {\tt env.step}  (a_t)$
\STATE Calculate RND bonus $R_t^{RND}=\left\|f\left(s_{t+1}\right)-\hat{f}\left(s_{t+1}\right)\right\|^2$ with normalization
\STATE Add $(s_t, s_{t+1}, a_t, R_t + \lambda R_t^{RND})$ to $\mathcal{D}$ 
\ENDFOR
\STATE Optimize $\pi_\theta$ w.r.t PPO loss on $\mathcal{D}$
\STATE Optimize ${\hat{f}}_\theta$ w.r.t. MSE loss on $\mathcal{D}$ using Adam
\ENDFOR
\STATE $\pi^{\prime} \leftarrow \pi_{\theta}$
\end{algorithmic}
\end{algorithm}

\subsection{Theoretical Analysis}
\label{subsec:theory}
Finally, we provide theoretical analysis demonstrating that our refining algorithm can tighten the sub-optimality gap: $\text{SubOpt} := V^{\pi^*}(\rho) - V^{\pi'}(\rho)$, (\ie the gap between the long-term reward collected by the optimal policy $\pi^*$ and that obtained by the refined policy $\pi'$ when starting from the default initial state distribution $\rho$).

In particular, we aim to answer the following two questions: 
\textbf{Q1:}{\em What are the benefits of incorporating StateMask to determine the exploration frontier?} 

\textbf{Q2:} {\em what advantages does starting exploration from the mixed initial distribution offer?}

To answer the questions, we first show that determining the exploration frontiers  based on StateMask is equivalent to sampling states from a better policy compared to $\pi$. Then, we demonstrate that under the mixed initial distribution as introduced in \autoref{sec:tech}, we could provide a tighter upper bound for the sub-optimality of trained policy $\pi$ compared with randomly selecting visited states to form the initial distribution.

In order to answer \textbf{Q1}, we begin with \sref{Assumption}{assumption:better} to assume the relationship between the policy value and the state distribution mismatch coefficient.

\begin{assumption}
\label{assumption:better}
For two polices $\pi$ and $\hat{\pi}$, if $\eta(\hat{\pi}) \geq \eta(\pi)$, then we have $\left\|\frac{d_\rho^{\pi^*}}{d_{\rho}^{\hat{\pi}}}\right\|_{\infty} \leq \left\|\frac{d_\rho^{\pi^*}}{d_{\rho}^{\pi}}\right\|_{\infty}$.
\end{assumption}
Intuitively, this assumption posits that a superior policy would inherently possess a greater likelihood of visiting all favorable states. We give validation of this assumption in a 2-state MDP in \sref{Appendix}{appendix:2state}.

We further present \sref{Lemma}{lemma:sampling} to answer \textbf{Q1}, \ie the benefits of incorporating StateMask to determine the exploration frontier. The proof of \sref{Lemma}{lemma:sampling} can be found in \sref{Appendix}{appendix:lemma_q1}.

\begin{lemma}
\label{lemma:sampling}
Given a pre-trained policy $\pi$, our MaskNet-based sampling approach in \autoref{sec:tech} is equivalent to sampling states from a state occupation distribution induced by an improved policy $\hat{\pi}$.
\end{lemma}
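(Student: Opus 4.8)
The plan is to turn the informal phrase ``this re-weighting-like sampling is equivalent to sampling from $d_\rho^{\hat\pi}$'' into three precise steps: (i) the trained mask network assigns each state an importance weight that is monotone in a concrete ``advantage gap'' of $\pi$; (ii) selecting the most critical state on a $\pi$-trajectory draws states from $d_\rho^\pi$ tilted by those weights; and (iii) this tilted law coincides with the discounted state-occupancy measure of some policy $\hat\pi$ with $\eta(\hat\pi)\ge\eta(\pi)$ --- which is exactly the input required to invoke \sref{Assumption}{assumption:better} in the answer to \textbf{Q1}.

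For step (i), I would start from the reformulated objective $J(\theta)=\text{max }\eta(\bar\pi)$ with the blinding bonus $\alpha a_t^m$ and apply the performance-difference identity, which here collapses to
\begin{equation*}
\eta(\bar\pi)-\eta(\pi)=\frac{1}{1-\gamma}\,\mathbb{E}_{s\sim d_\rho^{\bar\pi}}\big[\tilde{\pi}(a^m{=}1\mid s)\,\mathbb{E}_{a\sim\pi^{r}}[A^{\pi}(s,a)]\big],
\end{equation*}
since at every state the masked agent either keeps $\pi$'s action (advantage $0$) or substitutes a random action. Because $\mathbb{E}_{a\sim\pi^{r}}[A^{\pi}(s,a)]\le 0$ by \sref{Assumption}{assumption:random}, maximizing $\eta(\bar\pi)+\alpha\,\mathbb{E}[\,\text{number of blinds}\,]$ pushes the optimal mask to blind more where $g(s):=-\mathbb{E}_{a\sim\pi^{r}}[A^{\pi}(s,a)]\ge 0$ is small and to preserve $\pi$'s action where $g(s)$ is large; hence the importance score $I(s):=\tilde{\pi}(a^m{=}0\mid s)$ is a nondecreasing function of $g(s)$, i.e.\ the procedure flags precisely the states at which $\pi$'s decision matters most. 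For step (ii), writing the selection rule as ``draw the state of (near-)maximal $I(s)$ along the trajectory'' and comparing with the definition of $d_\rho^\pi$, a routine occupancy-measure computation yields $d_\rho^{\hat\pi}(s)\propto h(I(s))\, d_\rho^{\pi}(s)$ for some nondecreasing $h$.

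Step (iii) is the main obstacle, because an arbitrary reweighting of $d_\rho^\pi$ need not be a valid discounted occupancy measure (occupancy measures are confined to the Bellman-flow polytope). Instead of solving the inverse problem for $\hat\pi$, I would construct it directly from $\pi$ by shifting, at high-gap states, action mass toward $\pi$'s own (better-than-random, by \sref{Assumption}{assumption:random}) action and away from low-value continuations; this keeps $\hat\pi$ a legitimate stationary policy, concentrates its state visitation on the same high-gap states the mask selects (so $d_\rho^{\hat\pi}$ matches the critical-state law up to the sharpness of the tilting), and yields $\eta(\hat\pi)\ge\eta(\pi)$ via \sref{Assumption}{assumption:random} together with the argument establishing \sref{Theorem}{theorem:upper_bound}. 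Finally I would observe that only the \emph{existence} of such an improved $\hat\pi$ realizing the critical-state distribution is needed here; the quantitative consequence --- a smaller distribution-mismatch coefficient and hence a tighter $\mathrm{SubOpt}$ --- is then obtained by plugging this $\hat\pi$ into \sref{Assumption}{assumption:better} in the subsequent lemma.
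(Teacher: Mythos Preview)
Your route diverges from the paper's at a crucial point. The paper does \emph{not} try to realize the critical-state law as the true occupancy of an explicitly constructed $\hat\pi$; instead it passes to the \emph{state--action} level, writing $d_\rho^{\hat\pi}(s,a)=d_\rho^{\pi}(s,a)\,w(s,a)$ with $w(s,a)$ monotone in $Q^\pi(s,a)$ (arguing that the mask implicitly scores $Q_{\mathrm{diff}}=Q^\pi(s,a)-\mathbb{E}_{a'}Q^\pi(s,a')$, not merely your state-only gap $g(s)$), \emph{defines} $\hat\pi(a\mid s):=w(s,a)\,\pi(a\mid s)\,d_\rho^{\pi}(s)/d_\rho^{\hat\pi}(s)$, and then proves a self-contained policy-improvement proposition: whenever $g(\hat\pi(a\mid s))=g(\pi(a\mid s))+h(s,Q^\pi(s,a))$ with $g$ and $h(s,\cdot)$ increasing, one has $V^{\hat\pi}(s)\ge V^\pi(s)$ for every $s$. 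The proof partitions actions by the sign of $\hat\pi-\pi$, shows the enlarged set carries the higher $Q^\pi$-values, and iterates a one-step rollout inequality $V_{l+1}\ge V_l$ to infinity. Taking $g=\log$ recovers exactly the reweighting above, and that proposition is the whole engine; nothing about Bellman-flow feasibility is invoked.

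Your step (iii), by contrast, has a genuine gap. ``Shifting action mass toward $\pi$'s own action'' at high-gap states leaves $\hat\pi=\pi$ there and does nothing to redirect visitation; to push $d_\rho^{\hat\pi}$ onto the critical states you would have to alter actions at \emph{other} states so that trajectories funnel into the high-$g$ region, and you give no argument that such deflection preserves or raises value. Nor does the proof of \sref{Theorem}{theorem:upper_bound} help: it establishes $\eta(\bar\pi)\le\eta(\pi)$ by mixing in a \emph{worse} random policy, so reversing the inequality needs a different mechanism --- precisely the $Q^\pi$-monotone action reweighting and the accompanying improvement proposition that the paper supplies and your outline lacks. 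Your observation that an arbitrary state-tilt of $d_\rho^\pi$ need not lie in the flow polytope is well taken (and the paper quietly sidesteps rather than resolves it), but without the $(s,a)$-level tilt and the improvement lemma you do not yet have a workable substitute.
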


In order to answer \textbf{Q2}, we start with presenting \sref{Theorem}{theorem:performance_diff} to bound the sub-optimality via the state distribution mismatch coefficient. 

\begin{theorem}
Assume that for the refined policy $\pi^{\prime}$,  $\mathbb{E}_{s \sim d_{\mu}^{\pi'}}\left[\max _a A^{\pi'}(s, a)\right] < \epsilon$. For two initial state distributions $\mu$ and $\rho$, we have the following bound~\cite{kakade2002approximately}
\vspace{-5pt}
\begin{equation}
\small
V^{\pi^*}(\rho)-V^{\pi'}(\rho) 
\leq \mathcal{O}(\frac{\epsilon}{(1-\gamma)^2} \left\|\frac{d_\rho^{\pi^*}}{d_{\rho}^{\hat{\pi}}}\right\|_{\infty}).
\end{equation}
\label{theorem:performance_diff}
\vspace{-15pt}
\end{theorem}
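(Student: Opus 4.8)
The plan is to combine the performance difference lemma of \citet{kakade2002approximately} with a change of measure from $d_\rho^{\pi^*}$ to $d_\mu^{\pi'}$, and then exploit the mixture structure $\mu(s) = \beta\, d_\rho^{\hat{\pi}}(s) + (1-\beta)\rho(s)$ to pass from the occupancy of $\pi'$ under $\mu$ to the mismatch coefficient against $d_\rho^{\hat{\pi}}$.

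First I would invoke the performance difference lemma with the optimal policy $\pi^*$ and the refined policy $\pi'$,
\[
V^{\pi^*}(\rho) - V^{\pi'}(\rho) = \frac{1}{1-\gamma}\, \mathbb{E}_{s\sim d_\rho^{\pi^*}}\, \mathbb{E}_{a\sim \pi^*(\cdot\mid s)}\left[A^{\pi'}(s,a)\right].
\]
Since $\mathbb{E}_{a\sim \pi^*(\cdot\mid s)}[A^{\pi'}(s,a)] \le \max_a A^{\pi'}(s,a)$ and, crucially, $\max_a A^{\pi'}(s,a) \ge \mathbb{E}_{a\sim\pi'(\cdot\mid s)}[A^{\pi'}(s,a)] = 0$ for every $s$, the integrand on the right is nonnegative, so I can re-weight the expectation to the occupancy $d_\mu^{\pi'}$ and bound the right-hand side by $\big\|d_\rho^{\pi^*}/d_\mu^{\pi'}\big\|_\infty \, \mathbb{E}_{s\sim d_\mu^{\pi'}}[\max_a A^{\pi'}(s,a)] \le \big\|d_\rho^{\pi^*}/d_\mu^{\pi'}\big\|_\infty\, \epsilon$, invoking the hypothesis on $\pi'$.

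Next I would control the mismatch coefficient against $d_\mu^{\pi'}$. For any policy and any start distribution, $d_\mu^{\pi'}(s) = (1-\gamma)\sum_{t\ge 0}\gamma^t \Pr^{\pi'}(s_t=s\mid s_0\sim\mu) \ge (1-\gamma)\mu(s)$; combined with $\mu(s)\ge \beta\, d_\rho^{\hat{\pi}}(s)$ this gives $d_\mu^{\pi'}(s) \ge (1-\gamma)\beta\, d_\rho^{\hat{\pi}}(s)$ pointwise, hence $\big\|d_\rho^{\pi^*}/d_\mu^{\pi'}\big\|_\infty \le \frac{1}{(1-\gamma)\beta}\big\|d_\rho^{\pi^*}/d_\rho^{\hat{\pi}}\big\|_\infty$. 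Substituting back yields $V^{\pi^*}(\rho)-V^{\pi'}(\rho) \le \frac{\epsilon}{(1-\gamma)^2\beta}\big\|d_\rho^{\pi^*}/d_\rho^{\hat{\pi}}\big\|_\infty$, and absorbing the fixed hyper-parameter $\beta$ into the $\mathcal{O}(\cdot)$ gives the claimed bound. To connect this with the ``tighter'' narrative I would then note that, by \sref{Lemma}{lemma:sampling}, the MaskNet sampling corresponds to an improved policy $\hat{\pi}$ with $\eta(\hat{\pi})\ge\eta(\pi)$, so \sref{Assumption}{assumption:better} delivers $\big\|d_\rho^{\pi^*}/d_\rho^{\hat{\pi}}\big\|_\infty \le \big\|d_\rho^{\pi^*}/d_\rho^{\pi}\big\|_\infty \le C$, strictly improving on a reset distribution built from $\pi$'s visited states.

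The main obstacle is the change-of-measure and mixture step: it only goes through because (i) $\max_a A^{\pi'}(s,a)$ is nonnegative, which makes the importance ratio exit as an $\ell_\infty$ factor in the favorable direction, and (ii) the occupancy $d_\mu^{\pi'}$ inherits a pointwise lower bound from its own initial distribution $\mu$, which in turn dominates $\beta\, d_\rho^{\hat{\pi}}$. Tracking the $(1-\gamma)^{2}$ scaling --- one factor from the performance difference lemma, one from $d_\mu^{\pi'}\ge(1-\gamma)\mu$ --- is where care is needed; the remainder is bookkeeping.
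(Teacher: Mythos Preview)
Your proposal is correct and follows essentially the same route as the paper's proof: apply the performance difference lemma, pass from $d_\rho^{\pi^*}$ to $d_\mu^{\pi'}$ via the $\ell_\infty$ mismatch (using nonnegativity of $\max_a A^{\pi'}(s,a)$), then invoke $d_\mu^{\pi'}\ge(1-\gamma)\mu$ and $\mu\ge\beta\,d_\rho^{\hat{\pi}}$ to collect the two $(1-\gamma)$ factors and absorb $\beta$ into the $\mathcal{O}(\cdot)$. If anything, your explicit justification that $\max_a A^{\pi'}(s,a)\ge 0$ is a point the paper's derivation uses silently.
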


The proof of \sref{Theorem}{theorem:performance_diff} can be found in \sref{Appendix}{appendix:proof_theorem_1}. It indicates that the upper bound on the difference between the performance of the optimal policy $\pi^*$ and that of the policy $\pi'$ after refining is proportional to $\left\|\frac{d_\rho^{\pi^*}}{d_{\rho}^{\hat{\pi}}}\right\|_{\infty}$. 
With \sref{Assumption}{assumption:better} and \sref{Lemma}{lemma:sampling}, we now claim that our refining method with our explanation could further tighten the sub-optimality gap via reducing the distribution mismatch coefficient compared with forming an initial distribution by random selecting visited states, \ie with a random explanation.

\begin{claim}
We can form a better (mixed) initial state distribution $\mu$ with our explanation method and tighten the upper bound of $V^{\pi^*}(\rho)-V^{\pi'}(\rho)$ compared with random explanation.
\label{claim:tight_bound}
\end{claim}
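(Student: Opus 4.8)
The plan is to reduce \sref{Claim}{claim:tight_bound} to a one-line comparison of distribution mismatch coefficients, using \sref{Lemma}{lemma:sampling}, \sref{Theorem}{theorem:performance_diff}, and \sref{Assumption}{assumption:better}. First I would make explicit what ``random explanation'' does inside our framework: it selects visited states uniformly along trajectories of the pre-trained policy $\pi$, so the induced set of ``critical'' states is distributed as $d_\rho^{\pi}$, and the resulting mixed initial distribution is $\mu_{\mathrm{rand}}(s) = \beta\, d_\rho^{\pi}(s) + (1-\beta)\rho(s)$. By contrast, RICE selects critical states with the trained MaskNet, and \sref{Lemma}{lemma:sampling} identifies the induced distribution with $d_\rho^{\hat{\pi}}$ for a policy $\hat{\pi}$ satisfying $\eta(\hat{\pi}) \ge \eta(\pi)$; hence RICE uses $\mu(s) = \beta\, d_\rho^{\hat{\pi}}(s) + (1-\beta)\rho(s)$. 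So the only structural difference between the two schemes is $d_\rho^{\hat{\pi}}$ versus $d_\rho^{\pi}$ in the $\beta$-weighted component.

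Next I would invoke \sref{Theorem}{theorem:performance_diff} twice, for a common target accuracy $\epsilon$ of the residual-advantage term $\mathbb{E}_{s\sim d_\mu^{\pi'}}[\max_a A^{\pi'}(s,a)]$: it gives $V^{\pi^*}(\rho)-V^{\pi'}(\rho) \le \mathcal{O}\big(\tfrac{\epsilon}{(1-\gamma)^2}\,\|d_\rho^{\pi^*}/d_\rho^{\hat{\pi}}\|_\infty\big)$ for RICE and the same expression with $\hat{\pi}$ replaced by $\pi$ for the random-explanation baseline. The claim then follows from $\|d_\rho^{\pi^*}/d_\rho^{\hat{\pi}}\|_\infty \le \|d_\rho^{\pi^*}/d_\rho^{\pi}\|_\infty$, which is exactly \sref{Assumption}{assumption:better} applied to the pair $(\pi,\hat{\pi})$ together with $\eta(\hat{\pi})\ge\eta(\pi)$ from \sref{Lemma}{lemma:sampling}. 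I would then add two short remarks: (i) the inequality is strict whenever $\hat{\pi}$ puts strictly more relative mass than $\pi$ on the states attaining the maximum of $d_\rho^{\pi^*}/d_\rho^{\pi}$, which is precisely the intended effect of an informative explanation; and (ii) mixing with $\rho$ does not spoil the ordering, since $\mu(s)\ge \beta\, d_\rho^{\hat{\pi}}(s)$ yields $\|d_\rho^{\pi^*}/\mu\|_\infty \le \tfrac{1}{\beta}\,\|d_\rho^{\pi^*}/d_\rho^{\hat{\pi}}\|_\infty < \infty$ under \sref{Assumption}{assumption:mismatch}, so the $\rho$-component and the value of $\beta$ serve only to curb overfitting, not to change the comparison.

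The main obstacle I anticipate is not a calculation but pinning down the precise sense in which the two schemes are being ``compared'', because the residual advantage $\epsilon$ in \sref{Theorem}{theorem:performance_diff} is itself a functional of the initial distribution $\mu$ and hence of the explanation. The clean fix is to state the comparison at a fixed $\epsilon$ (i.e., run the PPO refinement to the same advantage tolerance in both settings), so that the mismatch coefficient is the only quantity that moves; a more ambitious route would be to argue that the whole refinement pipeline is monotone in the mismatch coefficient, but that is not needed here. Finally, I would be explicit that \sref{Assumption}{assumption:better} is what makes the argument short, and point the reader to the 2-state MDP in \sref{Appendix}{appendix:2state} where it is verified.
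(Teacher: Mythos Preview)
Your proposal is correct and follows essentially the same route as the paper: identify random explanation with sampling from $d_\rho^{\pi}$, identify MaskNet-based sampling with $d_\rho^{\hat{\pi}}$ via \sref{Lemma}{lemma:sampling}, then apply \sref{Assumption}{assumption:better} to compare the mismatch coefficients in the bound of \sref{Theorem}{theorem:performance_diff}. Your extra remarks on the role of $\beta$ and on holding $\epsilon$ fixed across the two schemes are more explicit than the paper's analysis but do not change the argument.
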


The details of the analysis can be found in \sref{Appendix}{appendix:claim_1}. Based on \sref{Assumption}{assumption:mismatch} and \sref{Claim}{claim:tight_bound}, we can learn to perform as well as the
optimal policy as long as the
visited states of the optimal policy are covered by the (mixed) initial distribution.



\section{Evaluation}
\label{sec:eval}
In this section,  we start with our experiment setup and design, followed by experiment results and analysis. We provide additional evaluation details in \autoref{appendix:evaluation}.


\subsection{Experiment Setup}
\label{sec:setup}

\para{Environment Selection.} We select eight representative environments to demonstrate the effectiveness of {\tt RICE} across two categories: simulated games (Hopper, Walker2d, Reacher, and HalfCheetah of the MuJoCo games) and real-world applications (selfish mining, network defense, autonomous driving, and malware mutation)~\footnote{These are representative security applications that have a significant impact on the security community~\cite{anderson2018learning} and they represent RL tasks with sparse rewards, which are common in security applications.}. We additionally run the experiments in the three sparse MuJoCo games introduced by \citet{mazoure2019leveraging}. The details of these applications can be found in \sref{Appendix}{appendix:app_description}.

\para{Baseline Explanation Methods.} 
Since our explanation method proposes an alternative design of StateMask, the first baseline is StateMask. We compare our explanation method with StateMask to show the equivalence and efficiency of our method. Additionally, we introduce ``Random'' as a baseline explanation method. ``Random'' identifies critical steps by randomly selecting a visited state as the critical state. 

\para{Baseline Refining Methods.} We compare our refining method with three baselines. The first baseline is ``PPO fine-tuning''~\cite{schulman2017proximal}, \ie lowering the learning rate and continuing training with the PPO algorithm. The second baseline is a refining method introduced by StateMask~\cite{cheng2023statemask}, \ie resetting to the critical state and continuing training from the critical state. The third baseline is Jump-Start Reinforcement Learning (referred to as ``JSRL'')~\cite{uchendu2023jump}. JSRL introduces a guided policy $\pi_g$ to set up a curriculum to train an exploration policy $\pi_e$. Through initializing $\pi_e = \pi_g$, we can transform JSRL to be a refining method that can further improve the performance of the guided policy.

\para{Evaluation Metrics.} 
To evaluate the fidelity of the generated explanation, we utilize an established fidelity score metric defined in StateMask~\cite{cheng2023statemask}. The idea is to use a sliding window to step through all time steps and then choose the window with the highest average importance score (scored by the explanation method). The width of the sliding window is $l$ while the whole trajectory length is $L$. Then we randomize the action(s) at the selected critical step(s) in the selected window (\ie masking) and measure the average reward change as $d$. Additionally, we denote the maximum possible reward change as $d_{max}$. Therefore, the fidelity score is calculated as $\log(d/d_{max})-\log(l/L)$. A higher fidelity score indicates higher fidelity.

For the applications with dense rewards except the malware mutation application, we measure the reward of the target agent before and after refining. In the case of the malware mutation application, we report the ``final reward'' as the probability of evading the malware detector, both before and after refining. For the applications with sparse rewards, we report the performance during the refining process.

\subsection{Experiment Design}

We use the following experiments to evaluate 
the fidelity and efficiency of the explanation method,
the effectiveness of the refining method and other factors that influenced the system performance (\eg alternative design choices, hyper-parameters).

\para{Experiment I.} 
To show the equivalence of our explanation method with StateMask, we compare the fidelity of our method with StateMask. Given a trajectory, the explanation method first identifies and ranks top-$K$ important time steps. An accurate explanation means the important time steps have significant contributions to the final reward. To validate this, we let the agent fast-forward to the critical step and force the target agent to take random actions. Then we follow the target agent's policy to complete the rest of the time steps. If the explanation is accurate, we expect a major change to the final reward by randomizing the actions at the important steps. We compute the fidelity score of each explanation method as mentioned in StateMask across 500 trajectories. We set $K=10\%, 20\%, 30\%, 40\%$ and report the fidelity of the selected methods under each setup. We repeat each experiment 3 times with various random seeds and report the mean and standard deviation. 
Additionally, to show the efficiency of our design, we report the training time of the mask network using StateMask and our method when given a fixed number of training samples.

\para{Experiment II.} To show the effectiveness of the refining method, we compare the agent's performance after refining using our method and three aforementioned baseline methods, \ie PPO fine-tuning~\cite{schulman2017proximal}, StateMask's fine-tuning from critical steps~\cite{cheng2023statemask}, and Jump-Start Reinforcement Learning~\cite{uchendu2023jump}. For this experiment, all the refining methods use the {\em same explanation} generated by our explanation method if needed, to ensure a fair comparison. Additionally, we conduct a qualitative study to understand how our refining method influences agent behavior and performance. 

\para{Experiment III} To investigate how the quality of explanation affects the downstream refining process, we run our proposed refining method based on the critical steps identified by different explanation methods (Random, StateMask, and our method) and compare the agent's performance after refining.

\para{Experiment IV.} To show the versatility of our method, we examine the refining performance when the pre-trained agent was trained by other algorithms such as Soft Actor-Critic (SAC)~\cite{haarnoja2018soft}. First, we obtain a pre-trained SAC agent and then use Generative Adversarial Imitation Learning (GAIL)~\cite{ho2016generative} to learn an approximated policy network. We compare the refining performance using our method against baseline methods, \ie PPO fine-tuning~\cite{schulman2017proximal}, StateMask's fine-tuning from critical steps~\cite{cheng2023statemask}, and Jump-Start Reinforcement Learning~\cite{uchendu2023jump}. In addition, we also include fine-tuning the pre-trained SAC agent with the SAC algorithm as a baseline.

\para{Experiment V.} We test the impact of hyper-parameter choices for two primary hyper-parameters for refining: $p$ (used to control the mixed initial state distribution) and $\lambda$ (used to control the exploration bonus). For our refining method, we vary $p$ from $\{0, 0.25, 0.5, 0.75, 1\}$ and vary $\lambda$ from $\{0, 0.1, 0.01, 0.001\}$. By examining the agent's performance with various $\lambda$ values, we can further investigate the necessity of the exploration bonus. Additionally, we evaluate the choice of $\alpha$ for our explanation method (used to control the mask ratio for the mask network). Specifically, we vary $\alpha$ from $\{0.01, 0.001, 0.0001\}$.


\subsection{Experiment Results}
\label{sec:result}


\para{Fidelity and Efficiency of Explanation.} 
We compare the fidelity scores of our method with StateMask in all applications and provide the full results in \autoref{fig:fid_score} of \sref{Appendix}{appendix:exp-add}. We observe that the fidelity scores of StateMask and our method are comparable. Furthermore, We evaluate the efficiency of our explanation method compared with StateMask. We report the cost time and the number of samples when training our explanation method and StateMask in \autoref{table:efficiency} of \sref{Appendix}{appendix:exp-add}. We observe an average of 16.8\% drop in the training time compared with StateMask. The reason is that the training algorithm of the mask network in StateMask involves an estimation of the discounted accumulated reward with respect to the current policy of the perturbed agent and the policy of the target agent which requires additional computation cost. In contrast, our design only adds an additional term to the reward which is simple but effective.

\begin{table*}[t]
\footnotesize
\centering
\caption{
{\bf Agent Refining Performance}---%
{\small
``No Refine'' indicates the target agent's performance before refining. For the first group of experiments (left), we fixed the explanation method to our method (mask network) and varied the refining methods. For the second group of experiments (right), we fixed the refining method to our method and varied the explanation methods. We report the mean value (standard deviations) of the final reward after refining. A higher value is better. 
}
}
\resizebox{\textwidth}{!}{ 
\begin{tabular}{c?c?cccc?ccc}
\Xhline{1pt}
\multirow{2}{*}{Task} & \multirow{2}{*}{No Refine} & \multicolumn{4}{c?}{Fix Explanation; Vary Refine Methods}                                       & \multicolumn{3}{c}{Fix Refine; Vary Explanation Methods}                                        \\ \cline{3-9} 
&    & \multicolumn{1}{c|}{PPO}  & \multicolumn{1}{c|}{JSRL} & \multicolumn{1}{c|}{StateMask-R}  &  \textbf{Ours}            
& \multicolumn{1}{c|}{Random}   & \multicolumn{1}{c|}{StateMask}     & \textbf{Ours} \\ \Xhline{1pt}
Hopper        & 3559.44 (19.15)                & \multicolumn{1}{c|}{3638.75 (16.67)}  & \multicolumn{1}{c|}{3635.08 (9.82)}  & \multicolumn{1}{c|}{3652.06 (8.63)}  & \textbf{3663.91 (20.98)}  & \multicolumn{1}{c|}{3648.98 (39.06)}  & \multicolumn{1}{c|}{3661.86 (19.95)}   & \textbf{3663.91 (20.98)}  \\ \hline
Walker2d        & 3768.79 (18.68)      & \multicolumn{1}{c|}{3965.63 (9.46)}          & \multicolumn{1}{c|}{3963.57 (6.73)}  & \multicolumn{1}{c|}{3966.96 (3.39)}  & \textbf{3982.79 (3.15)}  & \multicolumn{1}{c|}{3969.64(6.38)}  & \multicolumn{1}{c|}{3982.67 (5.55)}   & \textbf{3982.79 (3.15)}  \\ \hline
Reacher        & -5.79 (0.73)   & \multicolumn{1}{c|}{-3.04 (0.04)}             & \multicolumn{1}{c|}{-3.23 (0.26)}  & \multicolumn{1}{c|}{-3.45 (0.32)}  & \textbf{-2.66 (0.03)}  & \multicolumn{1}{c|}{-3.11 (0.42)}  & \multicolumn{1}{c|}{-2.69 (0.28)}  & \textbf{-2.66 (0.03)}  \\ \hline
HalfCheetah        & 2024.09 (28.34)    & \multicolumn{1}{c|}{2133.31 (4.11)}            & \multicolumn{1}{c|}{2128.04 (0.91)}  & \multicolumn{1}{c|}{2085.28 (1.92)}  & \textbf{2138.89 (3.22)}  & \multicolumn{1}{c|}{2132.01 (0.76)}  & \multicolumn{1}{c|}{2136.23 (0.49)}    & \textbf{2138.89 (3.22)}  \\ \hline
Selfish Mining        & 14.36 (0.24)                & \multicolumn{1}{c|}{14.93 (0.45)}  & \multicolumn{1}{c|}{14.88 (0.51)}  & \multicolumn{1}{c|}{14.53 (0.33)}  & \textbf{16.56 (0.63)}  & \multicolumn{1}{c|}{15.09 (0.28)}    & \multicolumn{1}{c|}{16.49 (0.46)} & \textbf{16.56 (0.63)}  \\ \hline
Cage Challenge 2      & -23.64 (0.27)               & \multicolumn{1}{c|}{-23.58 (0.37)} & \multicolumn{1}{c|}{-22.97 (0.57)}  & \multicolumn{1}{c|}{-26.98 (0.84)} & \textbf{-20.02 (0.32)} & \multicolumn{1}{c|}{-25.94 (2.34)}    & \multicolumn{1}{c|}{-20.07 (1.33)} & \textbf{-20.02 (0.32)} \\ \hline
Auto Driving          & 10.30 (2.25)                & \multicolumn{1}{c|}{13.37 (3.10)}  & \multicolumn{1}{c|}{11.26 (3.66)}  & \multicolumn{1}{c|}{7.62 (1.77)}   & \textbf{17.03 (1.65)}  & \multicolumn{1}{c|}{11.72 (1.77)}    & \multicolumn{1}{c|}{16.28 (2.33)}& \textbf{17.03 (1.65)}  \\ \hline
Malware Mutation      & 42.20 (6.86)                & \multicolumn{1}{c|}{49.33 (8.59)}  & \multicolumn{1}{c|}{43.10 (7.24)}  & \multicolumn{1}{c|}{50.13 (8.14)}  & \textbf{57.53 (8.71)}  & \multicolumn{1}{c|}{48.60 (7.60)}    & \multicolumn{1}{c|}{57.16 (8.51)}& \textbf{57.53 (8.71)}  \\ \Xhline{1pt}
\end{tabular}
}
\label{table:retrain_method}
\end{table*}

\para{Effectiveness of Refining.} We compare the agent's performance after refining using different retaining methods across all applications with dense rewards in \autoref{table:retrain_method}. The performance is measured by the final reward of the refined agent. In most applications, rewards are typically assigned positive values. However, in Cage Challenge 2, the reward is designed to incorporate negative values (see \sref{Appendix}{appendix:app_description}). We have three main observations.  
First, we observe that our refining method can bring the largest improvement for the target agent in all applications. 
Second, we find that the PPO fine-tuning method only has marginal improvements for the agents due to its incapability of jumping out of local optima. Third, the refining method proposed in StateMask (which is to start fine-tuning only from critical steps) cannot always improve the agent's performance. The reason is that this refining strategy can cause overfitting and thus harm the agent's performance. We illustrate this problem in greater detail in a case study of Malware Mutation in \autoref{appendix:malware}. It is also worth mentioning that we discover design flows of Malware Mutation and present the details in \autoref{appendix:malware}.

We also run our experiments of varying refining methods on selected MuJoCo games with sparse rewards. \autoref{fig:sparse_games} shows the results of our method against other baselines in SparseHopper and SparseHalfCheetah games. We observe that our refining method has significant advantages over other baselines with respect to final performance and refining efficiency. Through varying explanation methods, we confirm that the contribution should owe to our explanation method. We leave the refining results of the SparseWalker2d game and the hyper-parameter sensitivity results of all sparse MuJoCo games in \sref{Appendix}{appendix:sparse_games}. 

In addition to numerical results, we also provide a qualitative analysis of the autonomous driving case to understand how {\tt RICE} influences agent behavior and performance, particularly in a critical state, in \sref{Appendix}{appendix:qualitative}. We visualize the agent's behavior before and after refining the agent to show that {\tt RICE} is able to help the agent break through the bottleneck based on the identified critical states of the failure. 

\begin{figure*}[t]
\centering
\includegraphics[width=\textwidth]{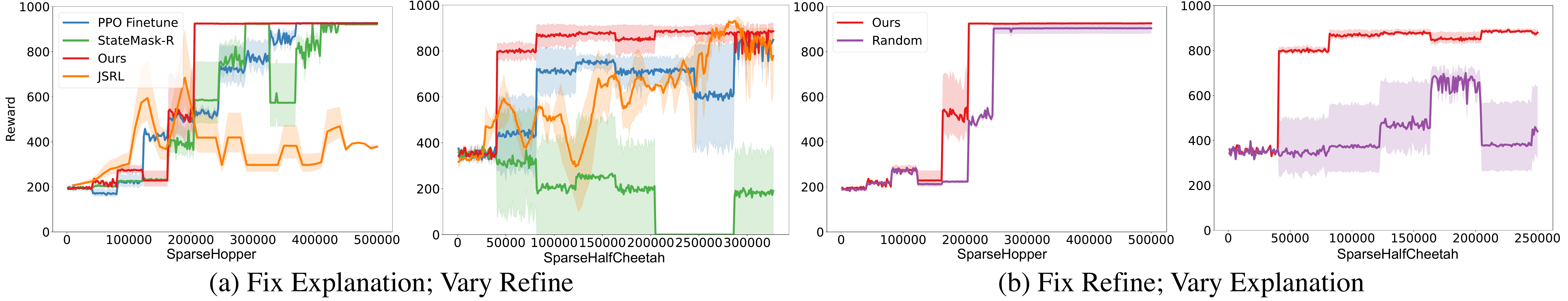}
\caption{\textbf{Agent Refining Performance in two Sparse MuJoCo Games}---%
{\small For Group (a), we fix the explanation method to our method (mask network) if needed while varying refining methods. For Group (b), we fix the refining method to our method while varying the explanation methods. }}
\label{fig:sparse_games}
\vspace{-5pt}
\end{figure*}

\para{Refining based on Different Explanations.} 
To examine how the quality of explanation affects the downstream refining process, we present \autoref{table:retrain_method}. We run our proposed refining method based on the critical steps identified by ours and Random. We have two main observations. First, using the explanation generated by our mask network, the refining achieves the best outcome across all applications. 
Second, using the explanation generated by our explanation significantly outperforms the random baseline. This aligns with our theoretical analysis that our refining framework provides a tighter bound for the sub-optimality.  



\para{Refining a Pre-trained Agent of Other Algorithms.} To show that our framework is general to refine pre-trained agents that were not trained by PPO algorithms, we do experiments on refining a SAC agent in the Hopper game. \autoref{fig:sac} demonstrates the advantage of our refining method against other baselines when refining a SAC agent. 
Additionally, we observe that fine-tuning the DRL agent with the SAC algorithm still suffers from the training bottleneck while switching to the PPO algorithm provides an opportunity to break through the bottleneck. 
We provide the refining curves when varying hyper-parameters $p$ and $\lambda$ in \sref{Appendix}{appendix:exp-add}.

\begin{figure}[t]
\centering
\includegraphics[width=0.48\textwidth]{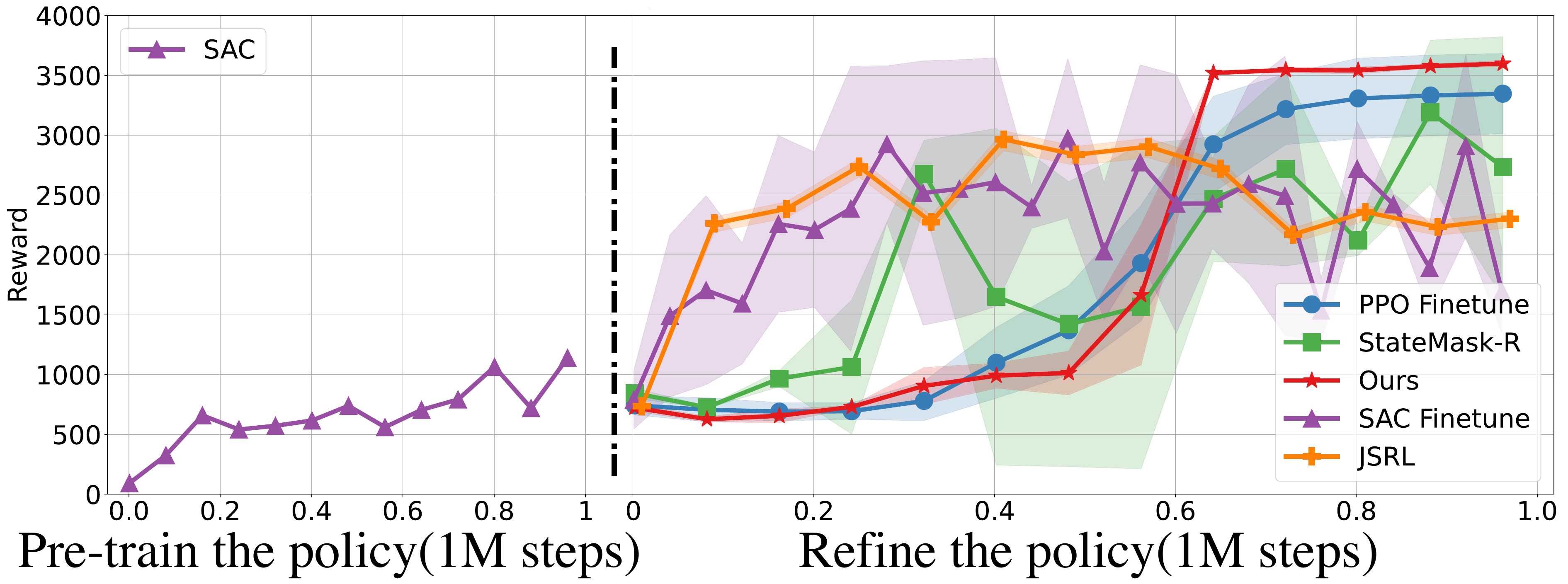}
\caption{\textbf{SAC Agent Refining Performance in Hopper Game} ---%
{\small In the left part, we show the training curve of obtaining a pre-trained policy through the SAC algorithm. In the right part, we show the refining curves of different methods.}}
\label{fig:sac}
\vspace{-10pt}
\end{figure}

\para{Impact of Hyper-parameters.} 
Due to space limit, we provide the sensitivity of hyper-parameters $p$, $\lambda$, and $\alpha$ in \sref{Appendix}{appendix:exp-add}. We have three main observations.


First, $p$ controls the mixing ratio of critical states (identified by the explanation method) and the initial state distribution for refining. The performance is low when $p=0$ (all starting from the default initial distribution) or $p=1$ (all starting from the identified critical states).  The performance has significant improvements when $0 < p < 1$, \ie using a mixed initial state distribution. Across all applications, we observe that setting $p$ to 0.25 or 0.5 is most beneficial. A mixed initial distribution can help eliminate the problem of overfitting. 

Second, as long as $\lambda>0$ (thereby enabling exploration), there is a noticeable improvement in performance, highlighting the importance of exploration in refining the pre-trained agent. The result is less sensitive to the specific value of $\lambda$. In general, a $\lambda$ value of 0.01 yields good performance across all four applications.

Third, recall that the hyper-parameter $\alpha$ is to control the bonus of blinding the target agent when training the mask network. We vary $\alpha$ from $\{0.01, 0.001, 0.0001\}$ and find that our explanation method is not that sensitive to $\alpha$. 


\section{Discussion}
\label{sec:discussion}

\para{Applicability.} {\tt RICE} is suitable for DRL applications that are trained within {\em controllable environment} (\eg simulators), in order to generate explanations. 
In fact, most of today's DRL applications rely on some form of simulator for their training. For example, for safety-critical applications such as autonomous driving, DRL agents are usually designed, trained, and tested in a simulated environment first before moving them to real-world testing. Simulation platforms broadly include Carla~\cite{Dosovitskiy17} and MetaDrive~\cite{li2022metadrive} which have been used to facilitate the training of DRL agents~\cite{zhang2021end, wang2023autonomous, peng2022safe}. Therefore, {\tt RICE} should be applicable to such DRL systems (especially during their development phase) for refining a pre-trained DRL agent.  


\para{Warm Start vs Cold Start.} As is mentioned in \autoref{sec:bg}, our method requires a ``warm start'' setting, \ie the agent has good coverage of the state distribution of the optimal policy. Even if the agent has good coverage of the state distribution, it does not necessarily mean that the agent has already learned a good policy due to the potential of choosing wrong actions~\cite{uchendu2023jump}. Therefore, the training bottleneck can still exist under a good coverage of the state distribution. In contrast, Our method does not work well in a “cold start” setting, \ie when the state coverage of the pre-trained policy is extremely poor. In that case, step-level explanation methods cannot give useful help and our method is actually equivalent to the RND method~\footnote{We provide an example of Mountain Car game in \sref{Appendix}{appendix:limitation} to illustrate this limitation.}.

\para{Critical State Filtering.} Though {\tt RICE} identifies critical states based on their necessity for achieving good outcomes, it does not fully consider their importance for further agent learning. For instance, a state might be deemed critical, yet the trained agent could have already converged to the optimal action for that state. In such cases, resetting the environment to this state doesn't significantly benefit the learning process. Future work could explore additional filtering of critical states using metrics such as policy convergence or temporal difference (TD) errors, which may help concentrate efforts and accelerate refinement.

\section{Conclusion}
In this paper, we present {\tt RICE} to break through bottlenecks of reinforcement learning training with explanation. We propose an alternative design of StateMask to provide high-fidelity explanations for DRL agents' behaviors, by identifying critical time steps that contribute the most to the agent's success/failure. We encourage the agent to explore starting from a mixture of default initial states and the identified critical states. Compared with existing refining strategies, we empirically show that our method brings the largest improvement after refining with theoretical guarantees.

\section*{Acknowledgements}

This project was supported in part by Northwestern University TGS Fellowship and NSF Grants 2225234, 2225225, 2229876, 1955719, and 2055233.

\section*{Impact Statement}

This paper presents work whose goal is to advance the field of reinforcement learning with explanation. There are many potential social impacts of our work. Our approach provides a feasible solution to break through the training bottlenecks of reinforcement learning with explanation, which is an automatic process and saves manual effort.

However, it is also worth noting the potential negative societal impacts of our work. Some of the real-world applications we select such as malware mutation can create attack examples that may bring additional ethical concerns. In the realm of security research, the ultimate goal of these tasks is to generate stronger testing cases to enhance the defense, and it is standard practice. Take malware mutation as an example, the produced samples can be used to proactively improve the robustness and effectiveness of malware detection systems (\eg through adversarial training), thereby benefiting cybersecurity defense~\cite{yang2017malware}.

\bibliography{ref}
\bibliographystyle{icml2024}

\newpage
\appendix
\onecolumn
\section{Proof of \sref{Theorem}{theorem:upper_bound}}
\label{appendix:proof_theorem33}

Based on the Performance Difference Lemma~\cite{kakade2002approximately}, we have
\begin{equation}
     \eta(\bar{\pi}) - \eta(\pi) = \frac{1}{1-\gamma} \mathbb{E}_{s \sim d_\rho^{\bar{\pi}}} \mathbb{E}_{a \sim \bar{\pi}(\cdot|s)}A^{\pi}\left(s, a\right) ,
\end{equation}
\noindent where $\pi$ is the policy of the target agent, $\bar{\pi}$ is the perturbed policy, $\rho$ is the initial distribution, and $\gamma$ is the discount rate.

Note that the perturbed policy $\bar{\pi}$ is a mixture of the target agent's policy $\pi$ and a random policy $\pi^r$ (\ie $\bar{\pi}(\cdot|s) = \tilde{\pi}(a^e = 0|s) \pi(\cdot|s) + \tilde{\pi}(a^e = 1|s)\pi^r(\cdot|s)$). Denote the probability of the mask network outputting 0 at state $s$ as $\tilde{\pi}(a^e = 0|s) = \xi(s)$ and the probability of the mask network outputting 1 at state $s$ as $\tilde{\pi}(a^e = 1|s) = 1 - \xi(s)$ Given the fact that $A^{\pi}\left(s, \pi(\cdot|s)\right) = \mathbb{E}_{a \sim \pi(s)} A^{\pi} (s,a) = 0$, we have 

\begin{equation}
\begin{split}
     \eta(\bar{\pi}) - \eta(\pi) 
     &=\frac{1}{1-\gamma} \mathbb{E}_{s \sim d_\rho^{\bar{\pi}}} \mathbb{E}_{a \sim \bar{\pi}(\cdot|s)}A^{\pi}\left(s, a\right)\\
     &= \frac{1}{1-\gamma} \mathbb{E}_{s \sim d_\rho^{\bar{\pi}}} \sum_{a}  \bar{\pi}(a|s) A^{\pi}\left(s, a\right)\\
     &= \frac{1}{1-\gamma} \mathbb{E}_{s \sim d_\rho^{\bar{\pi}}} \sum_{a}  \xi(s) \pi(a|s) A^{\pi}\left(s, a\right) + \frac{1}{1-\gamma} \mathbb{E}_{s \sim d_\rho^{\bar{\pi}}} \sum_{a}  (1-\xi(s)) \pi^r(a|s) A^{\pi}\left(s, a\right)\\
     &= \frac{1}{1-\gamma} \mathbb{E}_{s \sim d_\rho^{\bar{\pi}}} \xi (s) \mathbb{E}_{a \sim   \pi(\cdot|s)} A^{\pi}\left(s, a\right) + \frac{1}{1-\gamma} \mathbb{E}_{s \sim d_\rho^{\bar{\pi}}} (1-\xi (s))\mathbb{E}_{a \sim  \pi^r(\cdot|s)} A^{\pi}\left(s, a\right)\\
     &= \frac{1}{1-\gamma} \mathbb{E}_{s \sim d_\rho^{\bar{\pi}}} (1-\xi (s))\mathbb{E}_{a \sim \pi^r(\cdot|s)} A^{\pi}\left(s, a\right) \leq 0.\\
\end{split}
\end{equation}

Therefore, we show that $\eta(\bar{\pi})$ is upper bounded by $\eta(\pi)$ given \sref{Assumption}{assumption:random}.

\section{Theoretical Guarantee}
\subsection{Validation of \sref{Assumption}{assumption:better} in a 2-state MDP}
\label{appendix:2state}
In a 2-state MDP, we have two different states, namely, $s_A$ and $s_B$. The state distribution of any policy $\pi$ follows $d_{\rho}^{\pi}(s_A) + d_{\rho}^{\pi}(s_B) = 1$. As such, the set of feasible state marginal distribution can be described by a line $[(0, 1), (1, 0)]$ in $\mathbb{R}^2$. Let's denote vector $\mathbf{s} = [s_A, s_B]$. The expected total reward of a policy $\pi$ can be represented as $\eta(\pi) = <d_{\rho}^{\pi} (\mathbf{s}), R(\mathbf{s})>$~\cite{eysenbach2021information}, where $R(\mathbf{s})=[R(s_A), R(s_B)]$. \autoref{fig:vis_2mdp} shows the area of achievable state distribution via the initial state distribution $\rho$ (highlighted in orange).  

It should be noted that not all the points in the line $[(0, 1), (1, 0)]$ corresponded to a valid Markovian policy. However, for any convex combination of valid state occupancy measures, there exists a Markovian policy that has this state occupancy measure. As such, the policy search occurs within a convex polytope, essentially a segment (\ie, marked in orange) along this line.  In \autoref{fig:vis_2mdp}, we visualize $R(\bf{s})$ as vectors starting at the origin. Since $V^{\hat{\pi}}(\rho) \geq V^{\pi}(\rho)$, We mark $d_{\rho}^{\hat{\pi}}(\mathbf{s})$ closer to $R(\mathbf{s})$ (\ie the inner product between $d_{\rho}^{\hat{\pi}}(\mathbf{s})$ and $R(\bf{s})$ and is larger than $d_{\rho}^{\pi}(\mathbf{s})$ and $R(\bf{s})$). The following theorem explains how we determine the location of the location of $d_{\rho}^{\pi^*}(\mathbf{s})$ in \autoref{fig:vis_2mdp}.

\begin{theorem}[Fact 1~\cite{eysenbach2021information}]
For every state-dependent reward function, among the set of policies that maximize that reward function is one that lies at a vertex of the state marginal polytope.
\label{theorem:vertex}
\end{theorem}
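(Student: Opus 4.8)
\textbf{Proof proposal for Theorem~\ref{theorem:vertex}.}
The plan is to follow the standard occupancy-measure polytope argument. First, I would recall that for a fixed MDP the set of achievable state-action occupancy measures $\{d_\rho^\pi(s,a)\}$ is a convex polytope, cut out by the Bellman flow constraints; linearity of $\eta(\pi) = \frac{1}{1-\gamma}\langle d_\rho^\pi(s,a), R(s,a)\rangle$ in the occupancy measure is what makes the argument go through. Second, since we are maximizing a \emph{linear} functional over a polytope (even one restricted to state-dependent rewards, in which case the functional depends only on the state marginal, itself a projection of the polytope onto a lower-dimensional polytope), a maximizer is attained at a vertex of that polytope. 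Third, I would translate back: every vertex of the occupancy-measure polytope is the occupancy measure of a deterministic stationary policy, and in the 2-state illustration it is precisely one of the endpoints of the orange segment on the line $\{(x,y): x+y=1\}$. So the reward-maximizing policy can be taken to sit at such a vertex.

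Concretely, the key steps in order are: (1) write $\eta(\pi)$ as an inner product $\langle d_\rho^\pi(\mathbf{s}), R(\mathbf{s})\rangle$ (up to the $\tfrac{1}{1-\gamma}$ normalization), invoking the identity already used in the excerpt; (2) observe the feasible set of state marginals $d_\rho^\pi(\mathbf{s})$ is a convex polytope $\mathcal{K}$ --- in the 2-state case, the segment marked orange in \autoref{fig:vis_2mdp}; (3) apply the fundamental theorem of linear programming: a linear objective over a bounded polytope attains its maximum at a vertex of $\mathcal{K}$; (4) note that if several policies maximize the reward, the maximizing set is a face of $\mathcal{K}$, and any face contains a vertex, so \emph{among} the maximizers there is one at a vertex; (5) conclude by placing $d_\rho^{\pi^*}(\mathbf{s})$ at an endpoint of the orange segment in the figure, geometrically the extreme point most aligned with the direction $R(\mathbf{s})$.

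The main obstacle --- really the only subtle point --- is step (2)/(4): one must be careful that not every point of the line $[(0,1),(1,0)]$ corresponds to a valid Markovian policy, as the excerpt itself flags, so the relevant object is the sub-polytope (segment) of \emph{achievable} occupancy measures, not the entire simplex slice. Establishing that this achievable set is genuinely a polytope (finitely many linear constraints, bounded) and that its vertices correspond to deterministic policies is the technical heart; once that is in hand, the linear-programming extremality argument is routine. Since this statement is quoted as ``Fact 1'' from \citet{eysenbach2021information}, I would cite that reference for the polytope structure and give only the short LP-extremality argument in full, then specialize to the 2-state picture to locate $d_\rho^{\pi^*}(\mathbf{s})$.
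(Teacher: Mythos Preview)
Your proposal is correct and follows the standard LP-over-occupancy-polytope argument. Note, however, that the paper does \emph{not} actually prove this statement: it is quoted verbatim as ``Fact 1'' from \citet{eysenbach2021information} and invoked as a black box to justify placing $d_\rho^{\pi^*}(\mathbf{s})$ at a vertex of the orange segment in \autoref{fig:vis_2mdp}. So there is no in-paper proof to compare against; your outline is precisely the argument one would expect the cited reference to contain, and it is sound.
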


According to \sref{Theorem}{theorem:vertex},  $d_{\rho}^{\pi^*}(\mathbf{s})$ located at either vertex in the orange segment. Since $\pi^*$ is the optimal policy, it lies at the vertex that has the larger inner product within $R(s)$. Once the position of  $d_{\rho}^{\pi^*}(\mathbf{s})$ is determined, we can easily find $\left\|\frac{d_\rho^{\pi^*}(s)}{d_{\rho}^{\hat{\pi}} (s)}\right\|_{\infty} \leq \left\|\frac{d_\rho^{\pi^*}(s)}{d_{\rho}^{\pi} (s)}\right\|_{\infty}$ based on \autoref{fig:vis_2mdp}.

\begin{figure}[h]
\centering
\includegraphics[width=0.3\textwidth]{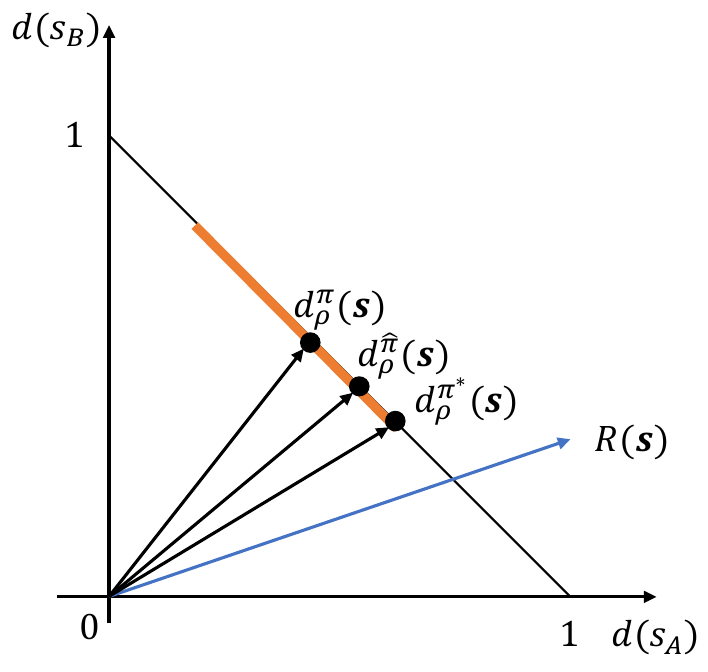}
\caption{Visualization of state occupancy measures with respect to different policies and the reward function in a 2-state MDP.}
\label{fig:vis_2mdp}
\vspace{-10pt}
\end{figure}

\subsection{Proof of \sref{Lemma}{lemma:sampling}}
\label{appendix:lemma_q1}
\begin{proof}
Since our explanation method provides the importance of each state, we could view the sampling based on the state's importance as a reweighting of the state occupancy measure. Mathematically, it can be expressed as $d_{\rho}^{\hat{\pi}}(s) = d_{\rho}^\pi(s) w(s)$, where $\hat{\pi}$ is the equivalent policy of reweighting the original policy $\pi$ and $w(s)$ is the weight provided by the mask network. Although the mask network takes the current input state as input, it implicitly considers the current action as well, as detailed by StateMask~\cite{cheng2023statemask}. Consequently, a more accurate formulation is  $d_{\rho}^{\hat{\pi}}(s, a) = d_{\rho}^\pi(s, a) w(s, a)$, where $w(s, a)$ represent the weight assigned by mask network.


Recall that our proposed explanation method is to randomize actions at non-critical steps, which essentially considers the value of $Q_{diff} = Q^{\pi}(s,a) - \mathbb{E}_{a' \in \mathcal{A}} [Q^{\pi}(s,a')]$. In fact, a larger $Q_{diff}$ indicates current time step is more critical to the agent's final reward. Our mask network approximates the value of $Q_{diff}$ via the deep neural network to determine the importance of each step, which implies $w(s, a) \propto Q_{diff} \propto Q^{\pi}(s,a)$. 

Next, we aim to prove that our MaskNet-based sampling approach is equivalent to sampling from a better policy $\hat{\pi}$. 

First, the equivalent policy $\hat{\pi}$ after reweighting can be expressed as
\begin{equation}
\small
\hat{\pi}(a|s) = \frac{d_{\rho}^{\hat{\pi}}(s,a)}{d_{\rho}^{\hat{\pi}}(s)}    
= \frac{d_{\rho}^\pi(s,a) w(s,a)}{d_{\rho}^{\hat{\pi}}(s)}
= w(s,a)\pi(a|s)\frac{d_{\rho}^\pi(s)}{d_{\rho}^{\hat{\pi}}(s)}.
\end{equation}

Further , we would like to show that if $w(s,a) = f(Q^{\pi}(s,a))$ where $f(\cdot)$ is a monotonic increasing function, $\hat{\pi}$ is uniformly as good as,or better than $\pi$, \ie $V^{\hat{\pi}}(s) \geq V^{\pi}(s)$.

\begin{proposition}
Suppose two policies $\hat{\pi}$ and $\pi$ satisfy $g\left(\hat{\pi}(a | s)\right)=g(\pi(a | s))+h\left(s, Q^\pi(s, a)\right)$ where $g(\cdot)$ is a monotonically increasing function, and $h(s,\cdot)$ is monotonically increasing for any fixed $s$ . Then we have $V^{\hat{\pi}}(s) \geq V^{\pi}(s)$, $\forall s \in \mathcal{S}$.
\label{proposition:Q re-weight}
\end{proposition}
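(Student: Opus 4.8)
\textbf{Proof proposal for Proposition \ref{proposition:Q re-weight}.}

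The plan is to prove this via a policy-improvement argument, showing that the condition $g(\hat{\pi}(a|s)) = g(\pi(a|s)) + h(s, Q^\pi(s,a))$ forces $\hat{\pi}$ to put more probability mass on high-$Q^\pi$ actions than $\pi$ does, and then invoking the Performance Difference Lemma (already used in the proof of Theorem \ref{theorem:upper_bound}). The key intermediate claim is that $\mathbb{E}_{a \sim \hat{\pi}(\cdot|s)}[A^\pi(s,a)] \geq 0$ for every $s$; once this is established, the Performance Difference Lemma gives
\begin{equation}
V^{\hat{\pi}}(s) - V^{\pi}(s) = \frac{1}{1-\gamma}\mathbb{E}_{s' \sim d_s^{\hat{\pi}}}\mathbb{E}_{a \sim \hat{\pi}(\cdot|s')}A^{\pi}(s',a) \geq 0,
\end{equation}
where $d_s^{\hat{\pi}}$ is the discounted occupancy starting from $s$, which is a nonnegative measure, so nonnegativity of the inner expectation propagates to the whole expression. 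Since $A^{\pi}(s,a) = Q^\pi(s,a) - V^\pi(s)$ and $\mathbb{E}_{a\sim\pi(\cdot|s)}A^\pi(s,a)=0$, proving the claim reduces to showing $\mathbb{E}_{a\sim\hat\pi(\cdot|s)}[Q^\pi(s,a)] \geq \mathbb{E}_{a\sim\pi(\cdot|s)}[Q^\pi(s,a)]$.

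To establish that inequality, fix a state $s$ and think of $\pi(\cdot|s)$ and $\hat\pi(\cdot|s)$ as two probability vectors over $\mathcal{A}$. The relation $g(\hat\pi(a|s)) = g(\pi(a|s)) + h(s, Q^\pi(s,a))$ with $g$ monotonically increasing means that $\hat\pi(a|s)$ is an increasing function of $\pi(a|s)$ and, crucially, an increasing function of $Q^\pi(s,a)$ (holding the "base" term fixed): among two actions with equal $\pi$-probability, the one with larger $Q^\pi$ receives larger $\hat\pi$-probability; more generally $\hat\pi(a|s) = g^{-1}\big(g(\pi(a|s)) + h(s,Q^\pi(s,a))\big)$ is coordinatewise monotone in $Q^\pi(s,a)$. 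The reweighting therefore shifts mass from low-$Q^\pi$ actions to high-$Q^\pi$ actions. I would make this rigorous by a pairwise exchange / rearrangement argument: consider the difference vector $\delta(a) = \hat\pi(a|s) - \pi(a|s)$, which sums to zero over $a$ (both are distributions). I want to argue that $\sum_a \delta(a) Q^\pi(s,a) \geq 0$, which would follow from a "positive correlation" statement — $\delta(a)$ and $Q^\pi(s,a)$ are, in an appropriate monotone sense, positively associated because $\hat\pi(a|s)$ is monotone increasing in $Q^\pi(s,a)$ while $\pi(a|s)$ is the unperturbed reference. Concretely, order the actions so that $Q^\pi(s,a_1) \leq \cdots \leq Q^\pi(s,a_n)$; using monotonicity of $g^{-1}$ and $h$ one shows the partial sums $\sum_{i \leq k}\delta(a_i) \leq 0$ for all $k$ (mass is being pushed up the ordering), and then Abel summation against the nondecreasing sequence $Q^\pi(s,a_i)$ yields $\sum_i \delta(a_i)Q^\pi(s,a_i) \geq 0$.

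The main obstacle I anticipate is exactly that last step — turning the coordinatewise monotonicity "$\hat\pi(a|s)$ increases with $Q^\pi(s,a)$" into the aggregate rearrangement inequality on the partial sums $\sum_{i\le k}\delta(a_i)\le 0$. The subtlety is that $\pi(a|s)$ itself varies across actions, so it is not simply a matter of $\hat\pi$ being a monotone transform of $Q^\pi$ alone; one needs that the map $a \mapsto \hat\pi(a|s)$ is "more top-heavy" than $a\mapsto\pi(a|s)$ with respect to the $Q^\pi$ ordering, and that requires carefully using that the \emph{same} monotone functions $g, h(s,\cdot)$ act on every action. A clean way to sidestep some of this is to additionally assume (or note it is implied by the intended application, where $h$ encodes the mask-network reweighting $w(s,a)\propto Q^\pi(s,a)$) that the perturbation is a monotone likelihood-ratio tilt, i.e. $\hat\pi(a|s)/\pi(a|s)$ is nondecreasing in $Q^\pi(s,a)$; under MLR dominance the inequality $\mathbb{E}_{\hat\pi}[Q^\pi] \geq \mathbb{E}_{\pi}[Q^\pi]$ is standard. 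I would first try to derive MLR dominance directly from the stated hypotheses on $g$ and $h$, and only fall back to assuming it if the general $g$ case proves too delicate; either way, once the single-state advantage inequality is in hand, the Performance Difference Lemma closes the proof uniformly over $s \in \mathcal{S}$.
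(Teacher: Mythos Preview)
Your high-level strategy is sound and matches the paper's: reduce to the single-state inequality $\sum_a \hat\pi(a|s)Q^\pi(s,a)\geq \sum_a \pi(a|s)Q^\pi(s,a)$, then globalize. For the global step you invoke the Performance Difference Lemma, whereas the paper uses a policy-iteration induction (define $V_l$ as the value of following $\hat\pi$ for $l$ steps then $\pi$, show $V_1\geq V_0$ from the single-state inequality, and induct). Both routes are valid; PDL is arguably cleaner once the single-state inequality is in hand.

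The genuine gap is exactly where you flag it, and your proposed MLR fallback does not work in general: the ratio $\hat\pi(a|s)/\pi(a|s)=g^{-1}\big(g(\pi(a|s))+h(s,Q^\pi(s,a))\big)/\pi(a|s)$ is monotone in $Q^\pi(s,a)$ for $g=\log$ but not for arbitrary monotone $g$, so MLR dominance cannot be derived from the stated hypotheses. The missing observation, which the paper exploits directly, is much simpler than Abel summation or MLR. Since $g$ is increasing, the sign of $\delta(a)=\hat\pi(a|s)-\pi(a|s)$ equals the sign of $g(\hat\pi(a|s))-g(\pi(a|s))=h(s,Q^\pi(s,a))$. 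Because $h(s,\cdot)$ is increasing, the sign of $h(s,Q^\pi(s,a))$ is monotone in $Q^\pi(s,a)$: every action with $\delta(a)\geq 0$ has $Q^\pi(s,a)$ at least as large as every action with $\delta(a)<0$. Hence there is a threshold $q(s)$ with $Q^\pi(s,a)\geq q(s)$ whenever $\delta(a)\geq 0$ and $Q^\pi(s,a)\leq q(s)$ whenever $\delta(a)<0$, and then
\[
\sum_a \delta(a)Q^\pi(s,a)\;\geq\;\sum_a \delta(a)\,q(s)\;=\;q(s)\sum_a\delta(a)\;=\;0,
\]
since $\sum_a\delta(a)=0$. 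This threshold argument replaces your partial-sum/rearrangement step entirely and requires no additional assumptions; with it, your PDL conclusion goes through.
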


\begin{proof}
For a given $s$, we partition the action set $\mathcal{A}$ into two subsets $\mathcal{A}_1$ and $\mathcal{A}_2$.
$$
\mathcal{A}_1 \triangleq\left\{a \in \mathcal{A} | \hat{\pi}(a | s) \geqslant \pi(a | s)\right\}.
$$
$$
\mathcal{A}_2 \triangleq\left\{a \in \mathcal{A} | \hat{\pi}(a | s) < \pi(a | s)\right\}.
$$

Thus, $\forall a_1 \in \mathcal{A}_1$, $\forall a_2 \in \mathcal{A}_2$, we have
\begin{equation}
\begin{split}
h\left(s, Q^\pi(s, a_1)\right) &= g\left(\hat{\pi}(a_1 | s)\right)- g(\pi(a_1 | s)) \\
&\geq 0 \\
&\geq g\left(\hat{\pi}(a_2 | s)\right)- g(\pi(a_2 | s))\\
& = h\left(s, Q^\pi(s, a_2)\right).
\end{split}
\end{equation}

Let $h\left(s, Q^\pi(s, a)\right) = Q^\pi(s, a)$. We can get  $Q^\pi(s, a_1) \geq Q^\pi(s, a_2)$ which means we can always find $q(s) \in \mathbb{R}$ such that $Q^\pi(s, a_1) \geq q(s) \geq Q^\pi(s, a_2)$, $\forall a_1 \in \mathcal{A}_1$, $\forall a_2 \in \mathcal{A}_2$. Thus,
\begin{equation}
\begin{split}
& \sum_{a \in \mathcal{A}} \hat{\pi}(a | s) Q^\pi(s, a)-\sum_{a \in A} \pi(a | s) Q^\pi(s, a) \\
= & \sum_{a_1 \in \mathcal{A}_1}\left(\hat{\pi}\left(a_1 | s\right)-\pi(a_1 | s)\right) Q^\pi\left(s, a_1\right)  +\sum_{a_2 \in \mathcal{A}_2}\left(\hat{\pi}\left(a_2 | s\right)-\pi(a_2 | s)\right) Q^\pi ( s, a_2)\\
\geq &  \sum_{a_1 \in \mathcal{A}_1}\left(\hat{\pi}\left(a_1 | s\right)-\pi\left(a_1 | s\right)\right) q(s) +  \sum_{a_2 \in \mathcal{A}_2}\left(\hat{\pi}\left(a_2 | s\right)-\pi\left(a_2 | s\right)\right) q(s)\\
= & \hspace{0.5em}  q(s) \sum_{a \in \mathcal{A}}\left(\pi'(a | s)-\pi(a| s)\right)\\
= & \hspace{0.5em}  0.\\
\end{split}
\end{equation}

Let $V_0 (s) = V^{\pi}(s)$. And we denote the value function of following $\hat{\pi}$ for $l$ steps then following $\pi$ as $V_l(s)=E_{a \sim \hat{\pi}(.| s)}\left[E_{s^{\prime}, r | s, a}\left(r+\gamma V_{l-1}\left(s^{\prime}\right)\right)\right]$ if $l \geq 1$.

First, we observe that 
\begin{equation}
\begin{aligned}
V_1(s) & =E_{a \sim \hat{\pi}(.| s)}\left[E_{s^{\prime}, r | s, a}\left(r+\gamma V^{\pi}(s^{\prime})\right)]\right. \\
& =\sum_{a \in \mathcal{A}} \hat{\pi}(a| s) Q^\pi(s, a) \\
& \geqslant \sum_{a \in \mathcal{A}} \pi(a | s) Q^\pi(s, a)\\
&=V_0(s).
\end{aligned}
\end{equation}

By induction, we assume $V_l (s) \geq V_{l-1}(s)$. Given that
$$
V_{l+1}(s)=E_{a \sim \hat{\pi}}\left[E_{s^{\prime}, r | s, a}\left(r+V_l\left(s^{\prime}\right)\right)\right],
$$
$$
V_l(s)=E_{a \sim \hat{\pi}}\left[E_{s^{\prime}, r | s, a}\left(r+V_{l-1}\left(s^{\prime}\right)\right)\right],
$$
we have $V_{l+1}(s) \geq V_l(s)$.

Therefore, we can conclude that $V_{l+1}(s) \geq V_l(s)$, $\forall l\geq 0$. We have $V_{\infty} (s) \geq V_0 (s)$ which is $V^{\hat{\pi}} (s) \geq V^{\pi} (s)$.
\end{proof}

Based on the \sref{Proposition}{proposition:Q re-weight}, if we choose $g$ as a logarithmic function and $h=\log(w(s,a)) + \log(d_{\rho}^\pi(s)) - \log(d_{\rho}^{\hat{\pi}}(s))$, we can easily verify that our MaskNet-based sampling approach 
 is equivalent to sampling from a better policy $\hat{\pi}$.

\end{proof}
\subsection{Proof of \sref{Theorem}{theorem:performance_diff}}
\label{appendix:proof_theorem_1}
\begin{proof}
Given the fact that the refined policy $\pi'$ is converged, (\ie the local one-step improvement is small $\mathbb{E}_{s \sim d_{\mu}^{\pi'}}\left[\max _a A^{\pi'}(s, a)\right] < \epsilon$), we have
\vspace{-10pt}
\begin{equation}
\begin{split}
\epsilon &> \sum_{s \in \mathcal{S}} d_{\mu}^{\pi'} (s)\left[\max _a A^{\pi'}(s, a)\right]\\
&\geq \min_{s} \left(\frac{d_\mu^{\pi'}(s)}{d_\rho^{\pi^*}(s)}\right) \sum_{s} d_\rho^{\pi^*}(s) \max _a A^{\pi'}(s, a)\\
&\geq\left\|\frac{d^{\pi^*}_{\rho}}{d_\mu^{\pi'}}\right\|_{\infty}^{-1} \sum_{s, a} d_\rho^{\pi^*}(s) \pi^*(a | s) A^{\pi'}(s, a).\\
\end{split}
\end{equation}

Based on the Performance Difference Lemma~\cite{kakade2002approximately},
for two policies $\pi^*, \pi^{\prime}$ and a state distribution $\rho$, the performance difference is bounded by
\begin{equation}
\begin{aligned}
V^{\pi^*}\left(\rho\right)-V^{\pi^{\prime}}\left(\rho\right) = \frac{1}{1-\gamma} E_{s \sim d_{\rho}^{\pi*}} E_{a \sim \pi^*(.| s)}\left[A^{\pi^{\prime}}(s, a)\right].
\end{aligned}    
\end{equation}

Then we have
\begin{equation}
\begin{split}
\epsilon & > (1-\gamma)\left\|\frac{d^{\pi^*}_{\rho}}{d_\mu^{\pi'}}\right\|_{\infty}^{-1} \left(V^\pi\left(\rho\right)-V^{\pi^{\prime}}\left(\rho\right)\right).
\end{split}
\end{equation}

Therefore, we have 
\begin{equation}
\small
V^{\pi^*}(\rho)-V^{\pi'}(\rho) 
\leq  \frac{\epsilon}{1-\gamma} \left\|\frac{d_\rho^{\pi^*}}{d_\mu^{\pi'}}\right\|_{\infty}.    
\end{equation}

Due to $d_\mu^{\pi'} (s) \geq (1-\gamma) \mu(s)$, we further obtain 
\begin{equation}
\small
V^{\pi^*}(\rho)-V^{\pi'}(\rho) 
\leq \frac{\epsilon}{(1-\gamma)^2} \left\|\frac{d_\rho^{\pi^*}}{\mu}\right\|_{\infty}.
\end{equation}
Since $\mu(s) = \beta d_{\rho}^{\hat{\pi}}(s) + (1 - \beta)\rho(s) \geq \beta d_{\rho}^{\hat{\pi}}(s)$, we have
\begin{equation}
\small
V^{\pi^*}(\rho)-V^{\pi'}(\rho) 
\leq \frac{\epsilon}{(1-\gamma)^2} \left\|\frac{d_\rho^{\pi^*}}{\beta d_{\rho}^{\hat{\pi}}}\right\|_{\infty}.
\end{equation}
In our case, $\beta$ is a constant (\ie a hyper-parameter), thus we could derive that
\begin{equation}
\small
V^{\pi^*}(\rho)-V^{\pi'}(\rho) 
\leq \mathcal{O}(\frac{\epsilon}{(1-\gamma)^2} \left\|\frac{d_\rho^{\pi^*}}{d_{\rho}^{\hat{\pi}}}\right\|_{\infty}),
\end{equation}
\noindent which completes the proof.
\end{proof}

\subsection{Analysis of \sref{Claim}{claim:tight_bound}}
\label{appendix:claim_1}

Recall that \sref{Lemma}{lemma:sampling} indicates that our MaskNet-based sampling approach is equivalent to sampling states from a better policy $\hat{\pi}$ compared with a random explanation sampling from the policy $\pi$, \ie $\eta(\hat{\pi}) \geq \eta(\pi)$. Let us denote the new initial distribution using our MaskNet-based sampling approach as $\mu$. By \sref{Assumption}{assumption:better}, we have $\left\|\frac{d_\rho^{\pi^*}}{d_{\rho}^{\hat{\pi}}}\right\|_{\infty} \leq \left\|\frac{d_\rho^{\pi^*}}{d_{\rho}^{\pi}}\right\|_{\infty}$. Using our explanation method introduces a smaller distribution mismatch coefficient than using a random explanation method. Therefore, we claim that using our explanation method, we are able to form a better initial distribution $\mu$ and tighten the upper bound in \sref{Theorem}{theorem:performance_diff}, \ie enhancing the agent's performance after refining.

\section{Details of Evaluation}
\label{appendix:evaluation}

\subsection{Implementation Details}

\para{Implementation of Our Method.} We implement the proposed method using PyTorch~\cite{paszke2019pytorch}. We implement our method in four selected MuJoCo games based on Stable-Baselines3~\cite{raffin2021stable}. We train the agents on a server with 8 NVIDIA A100 GPUs for all the learning algorithms. For all our experiments, if not otherwise stated, we use a set of default hyper-parameters for $p$, $\lambda$, and $\alpha$ (listed in \sref{Appendix}{appendix:exp-add}). 

We implement the environment reset function similar to \citet{ecoffet2019go} to restore the environment to selected critical states. This method is feasible in our case, as we operate within simulator-based environments. However, in the real world, it may not be always possible to return to a certain state with the same sequences of actions due to the stochastic nature of state transition. It's important to note that our framework is designed to be versatile and is indeed compatible with a goal/state-conditioned policy approach such as \citet{ecoffet2021first}. Given a trajectory with an identified most important state, we can select the most important state as the final goal and select the en-route intermediate states as sub-goals. Then we can train an agent to reach the final goal by augmenting each state with the next goal and giving a goal-conditioned reward once the next goal is reached until all goals are achieved.

\para{Implementation of Baseline Methods.} Regarding baseline approaches, we use the code released by the authors or implement our own version if the authors don't release the code. Specifically, as for StateMask, we use their released open-sourced code from~\url{https://github.com/nuwuxian/RL-state_mask}. Regarding Jump-Start Reinforcement Learning, we use the implementation from~\url{https://github.com/steventango/jumpstart-rl}.

\subsection{Extra Introduction to Applications}
\label{appendix:app_description}
\para{Hopper.}
Hopper game is a captivating two-dimensional challenge featuring a one-legged figure comprising a torso, thigh, leg, and a single supporting foot~\cite{erez2012infinite}. The objective is to propel the Hopper forward through strategic hops by applying torques to the three hinges connecting its body parts. Observations include positional values followed by velocities of each body part, and the action space involves applying torques within a three-dimensional action space. Under the {\tt dense} reward setting, the reward system combines healthy reward, forward reward, and control cost. Under the {\tt sparse} reward setting~\cite{mazoure2019leveraging}, the reward informs the x position of the hopper only if $x>0.6$ in our experiments. The episode concludes if the Hopper becomes unhealthy. We use ``Hopper-v3'' in our experiments.

\para{Walker2d.} Walker2d is a dynamic two-dimensional challenge featuring a two-legged figure with a torso, thighs, legs, and feet. The goal is to coordinate both sets of lower limbs to move in the forward direction by applying torques to the six hinges connecting these body parts. The action space involves six dimensions, allowing exert torques at the hinge joints for precise control. Observations encompass positional values and velocities of body parts, with the former preceding the latter. Under the {\tt dense} reward setting, the reward system combines a healthy reward bonus, forward reward, and control cost. Under the {\tt sparse} reward setting~\cite{mazoure2019leveraging}, the reward informs the x position of the hopper only if $x>0.6$ in our experiments. The episode concludes if the walker is deemed unhealthy. We use ``Walker2d-v3'' in our experiments and normalize the observation when training the DRL agent.

\para{Reacher.} Reacher is an engaging two-jointed robot arm game where the objective is to skillfully maneuver the robot's end effector, known as the fingertip, towards a randomly spawned target. The action space involves applying torques at the hinge joints. Observations include the cosine and sine of the angles of the two arms, the coordinates of the target, angular velocities of the arms, and the vector between the target and the fingertip. It is worth noting that there is no sparse reward implementation of Reacher-v2 in~\citet{mazoure2019leveraging}. The reward system comprises two components: ``reward distance'' indicating the proximity of the fingertip to the target, and ``reward control'' penalizing excessive actions with a negative squared Euclidean norm. The total reward is the sum of these components, and an episode concludes either after 50 timesteps with a new random target or if any state space value becomes non-finite. We use ``Reacher-v2'' in our experiments.

\para{HalfCheetah.} 
HalfCheetah is an exhilarating 2D robot game where players control a 9-link cheetah with 8 joints, aiming to propel it forward with applied torques for maximum speed. The action space contains six dimensions, that enable strategic movement. Observations include positional values and velocities of body parts. Under the {\tt dense} reward setting, the reward system balances positive ``forward reward'' for forward motion with ``control cost'' penalties for excessive actions. Under the {\tt sparse} reward setting~\cite{mazoure2019leveraging}, the reward informs the x position of the hopper only if $x>5$ in our experiments. Episodes conclude after 1000 timesteps, offering a finite yet thrilling gameplay experience. We use ``HalfCheetah-v3'' in our experiments and normalize the observation when training the DRL agent.

\para{Selfish Mining.}  
Selfish mining is a security vulnerability in blockchain protocols, identified by \citet{eyal2018majority}. 
When a miner holds a certain amount of computing power, they can withhold their freshly minted blocks from the public blockchain, thereby initiating a fork that is subsequently mined ahead of the official public blockchain. With this advantage, the miner can introduce this fork into the network, overwriting the original blockchain and obtaining more revenue. 

To find the optimal selfish mining strategies, \citet{bar2022werlman} proposed a deep reinforcement learning model to generate a mining policy. The policy takes the current chain state as the input and chooses from the three pre-determined actions, \ie adopting, revealing, and mining. With this policy network, the miner can obtain more mining rewards compared to using heuristics-based strategies. 

We train a PPO agent in the blockchain model developed by Bar-Zur~\etal~\cite{githubGitHubRoibarzurptoselfishmining}. The network architecture of the PPO agent is a 4-layer Multi-Layer Perceptron (MLP) with a hidden size of 128, 128, 128, and 128 in each layer. We adopt a similar network structure for training our mask network. The whale transaction has a fee of 10 with the occurring probability of 0.01 while other normal transactions have a fee of 1. The agent will receive a positive reward if his block is accepted and will be penalized if his action is determined to be unsuccessful, \eg revealing a private chain.

In our selfish mining task~\cite{bar2022werlman}, three distinct actions are defined as follows:

\emph{Adopt $l$}: The miner chooses to adopt the first $l$ blocks in the public chain while disregarding their private chain. Following this, the miner will continue their mining efforts, commencing from the last adopted block.

\emph{Reveal $l$}: This action becomes legal when the miner's private chain attains a length of at least $l$. The consequence of this action may result in either the creation of an active fork in the public chain or the overriding of the public chain.

\emph{Mine}: This action simply involves continuing with the mining process. Once executed, a new block is mined and subsequently added to either the private chain of the rational miner or to the public chain, contingent on which entity successfully mined the block.

\para{CAGE Challenge 2.} 
To inspire new methods for automating cyber defense, the Technical Cooperation Program (TTCP) launched the Autonomous Cyber Defence Challenge (CAGE Challenge) to produce AI-based blue teams for instantaneous response against cyber attacks~\cite{cage_challenge_2_announcement}. The goal is to create a DRL blue agent to protect a network against a red agent. The action space of the blue agent includes monitoring, analyzing, decoyApache, decoyFemitter, decoyHarakaSMPT, decoySmss, decoySSHD, decoySvchost, decoyTomcat, removing, and restoring. Note that the blue agent can receive a {\em negative reward} when the red agent gets admin access to the system (and continues to receive negative rewards as the red agent maintains the admin access). We use CAGE challenge 2 for our evaluation. 

We choose the champion scheme proposed by Cardiff University~\cite{githubGitHubJohncardiffcyborgcage2} in CAGE challenge 2~\cite{githubGitHubCagechallengecagechallenge2}. The target agent is a PPO-based blue agent to defend a network against the red agent ``B-line''. The trail has three different lengths, \ie 30, 50, and 100. The final reward is the sum of the average rewards of these three different lengths.

The action set of the blue agent is defined as follows.

\emph{Monitor}: The blue agent automatically collects the information about flagged malicious activity on the system and reports network connections and associated processes that are identified as malicious.

\emph{Analyze}: The blue agent analyzes the information on files associated with recent alerts including signature and entropy.

\emph{DecoyApache, DecoyFemitter, DecoyHarakaSMPT, DecoySmss, DecoySSHD, DecoySvchost, DecoyTomcat}: The blue agent sets up the corresponding decoy service on a specified host. An alert will be raised if the red agent accesses the decoy service.

\emph{Remove}: The blue agent attempts to remove red from a host by destroying malicious processes, files, and services. 

\emph{Restore}: The blue agent restores a system to a known good state. Since it significantly impacts the system's availability, a reward penalty of -1 will be added when executing this action.

\para{Autonomous Driving.} 
Deep reinforcement learning has been applied in autonomous driving to enhance driving safety. One representative driving simulator is MetaDrive~\cite{li2022metadrive}. A DRL agent is trained to guide a vehicle safely and efficiently to travel to its destination. MetaDrive converts the Birds Eye View (BEV) of the road conditions and the sensor information such as the vehicle's steering, direction, velocity, and relative distance to traffic lanes into a vector representation of the current state. The policy network takes this state vector as input and yields driving actions, including accelerating, braking, and steering commands. MetaDrive employs a set of reward functions to shape the learning process. For instance, penalties are assigned when the agent collides with other vehicles or drives out of the road boundary. To promote smooth and efficient driving, MetaDrive also incorporates rewards to encourage forward motion and the maintenance of an appropriate speed.

We select the ``Macro-v1'' environment powered by the MetaDrive simulator~\cite{li2022metadrive}. The goal of the agent is to learn a deep policy to successfully cross the car flow and reach the destination. We train the target agent and our mask network by the PPO algorithm following the implementation of DI-drive~\cite{didrive}. The environment receives normalized action to control the target agent $\mathbf{a} = [a_1, a_2] \in [-1,1]^2$. The action vector $\mathbf{a}$ will then be converted to the steering (degree), acceleration (hp), and brake signal (hp).

\para{Malware Mutation.} 
DRL has been used to assess the robustness of ML-based malware detectors. For example, \citet{anderson2018learning} propose a DRL-based approach to attack malware detectors for portable executable (PE) files. We use the ``Malconv'' gym environment \citet{raff2017malware} implemented in~\cite{githubGitHubBfilarmalware_rl}  for our experiments. We train a DRL agent based on Tianshou framework~\cite{tianshou2022}. The input of the DRL agent is a feature vector of the target malware and outputs the corresponding action to guide the malware manipulation. We present the action set of the MalConv gym environment in \autoref{table:malconv_action} for ease of comprehension in the case study section. A big reward of 10 is provided when evading detection.

The reward mechanism of the ``Malconv'' environment is as follows. Initially, the malware detection model will provide a score $sc_0$ of the current malware. If $sc_0$ is lower than some threshold, the malware has already evaded the detection. Otherwise, the DRL agent will take some mutation actions to bypass the detection. At step $t$, after executing the agent's action, the malware detection model will provide a new score $sc_t$. If $sc_t$ is lower than some threshold, the mutation is successful and a big reward of 10 will be given. Otherwise, the reward will be $sc_0 - sc_t$. The maximum allowed number of steps is 10.

\begin{table}[t]
\centering
\caption{Action set of the MalConv gym environment.
}
\small
\begin{tabular}{c|c}
\Xhline{1pt}
Action index & Action meaning                      \\ \Xhline{1pt}
0            & ``modify\_machine\_type''            \\ \hline
1            & ``pad\_overlay''                      \\ \hline
2            & ``append\_benign\_data\_overlay''     \\ \hline
3            & ``append\_benign\_binary\_overlay''   \\ \hline
4            & ``add\_bytes\_to\_section\_cave''     \\ \hline
5            & ``add\_section\_strings''             \\ \hline
6            & ``add\_section\_benign\_data''        \\ \hline
7            & ``add\_strings\_to\_overlay''         \\ \hline
8            & ``add\_imports''                      \\ \hline
9            & ``rename\_section''                   \\ \hline
10           & ``remove\_debug''                     \\ \hline
11           & ``modify\_optional\_header''          \\ \hline
12           & ``modify\_timestamp''                 \\ \hline
13           & ``break\_optional\_header\_checksum'' \\ \hline
14           & ``upx\_unpack''                       \\ \hline
15           & ``upx\_pack''                         \\ \Xhline{1pt}
\end{tabular}
\label{table:malconv_action}
\end{table}

\subsection{Additional Experiment Results}
\label{appendix:exp-add}

\para{Hyper-parameter Choices in Experiment I-V.} \autoref{table:transposed-hyper-parameters} summarizes our hyper-parameter choices in Experiment I-V. For all applications, we choose the coefficient of the intrinsic reward for training the mask network $\alpha$ as 0.01. The hyper-parameters $p$ and $\lambda$ for our refining method vary by application.

\begin{table}[t]
\caption{Hyper-parameter choices in Experiment I-V. ``Selfish'' represents Selfish Mining. ``Cage'' represents Cage Challenge 2. ``Auto'' represents Autonomous Driving. ``Malware'' represents Malware Mutation.}
\centering
\small
\begin{tabular}{c?c|c|c|c|c|c|c|c}
\Xhline{1pt}
Hyper-parameter  & Hopper & Walker2d  & Reacher & HalfCheetah           & Selfish & Cage & Auto & Malware \\ \Xhline{1pt}
$p$   & 0.25 & 0.25 & 0.50 & 0.50            & 0.25           & 0.50              & 0.25          & 0.50              \\ \hline
$\lambda$  & 0.001 & 0.01 & 0.001 &  0.01      & 0.001          & 0.01              & 0.01          & 0.01              \\ \hline
$\alpha$  & 0.0001 & 0.0001 &0.0001  &  0.0001        & 0.0001           & 0.0001              & 0.0001          & 0.0001              \\ \Xhline{1pt}
\end{tabular}
\label{table:transposed-hyper-parameters}
\end{table}

\para{Fidelity Scores in Experiment I.} \autoref{fig:fid_score}  shows the fidelity score comparison across all explanation methods. We have three key observations.
First, We observe that our explanation method has similar fidelity scores with StateMask across all applications, empirically indicating the equivalence of our explanation method with StateMask.
Second, we observe that our explanation method and StateMask have higher fidelity scores than random explanation across all applications, indicating that the mask network provides more faithful explanations for the target agents. 

\begin{figure}[t]
    \centering
    \includegraphics[width=\textwidth]{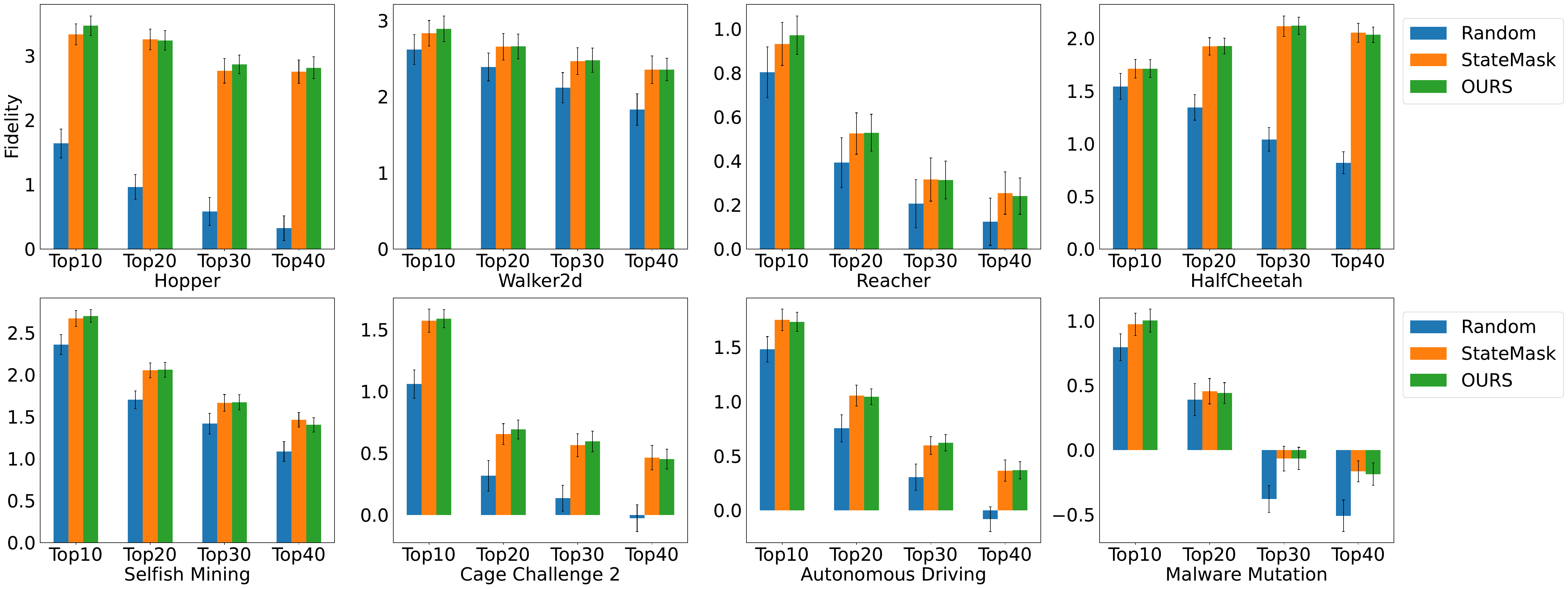}
    \caption{Fidelity scores for explanation generated by baseline methods and our proposed explanation method. Note that a higher score implies higher fidelity.}
    \label{fig:fid_score}
\end{figure}

\para{Efficiency Comparison in Experiment II.} 
\autoref{table:efficiency} reports the efficiency evaluation results when training a mask network using StateMask and our method. We observe that it takes 16.8\% less time on average to train a mask network using our method than using StateMask, which shows the advantage of our method with respect to efficiency.

\begin{table}[t]
\centering
\small
\caption{Efficiency comparison when training the mask network. We report the number of seconds when training the mask using a fixed number of samples. ``Selfish'' represents Selfish Mining. ``Cage'' represents Cage Challenge 2. ``Auto'' represents Autonomous Driving. ``Malware'' represents Malware Mutation.}
\begin{tabular}{c?c|c|c|c|c|c|c|c}
\Xhline{1pt}
Applications    & Hopper & Walker2d & Reacher & HalfCheetah & Selfish & Cage & Auto & Malware \\ \Xhline{1pt}
Num. of samples & $3 \times 10^5$       &    $3 \times 10^5$      & $3 \times 10^5$        &  $3 \times 10^5$           & $1.5 \times 10^6$        & $1 \times 10^7$     & 2443260     &  32349       \\ \Xhline{1pt}
StateMask       & 15393       &  2240        &  8571       &  1579           & 9520         & 79382      & 109802     &  50775       \\ \hline
Ours            & \textbf{12426}       & \textbf{1899}         & \textbf{7033}        &   \textbf{1317}          &  \textbf{8360}       & \textbf{65400}     & \textbf{88761}     & \textbf{41340}        \\ \Xhline{1pt}
\end{tabular}
\vspace{-10pt}
\label{table:efficiency}
\end{table}

\para{Comparison with Self-Imitation Learning.} We compare {\tt RICE} against the self-imitation learning (SIL) approach~\cite{oh2018self} across four MuJoCo games. We present the results presented in~\autoref{tab:self_imitation}. These experiment results demonstrate that {\tt RICE} consistently outperforms the self-imitation learning method. While self-imitation learning has the advantage of encouraging the agent to imitate past successful experiences by prioritizing them in the replay buffer, it cannot address scenarios where the agent (and its past experience) has errors or sub-optimal actions. In contrast, {\tt RICE} constructs a mixed initial distribution based on the identified critical states (using explanation methods) and encourages the agent to explore the new initial states. This helps the agent escape from local minima and break through the training bottlenecks.

\begin{table}[ht]
    \centering    
    \small
    \caption{Performance comparison between Self-Imitation Learning (SIL) and {\tt RICE} on four MuJoCo tasks.}
    \begin{tabular}{c?c|c|c|c}
        \Xhline{1pt}
        Method & Hopper & Walker2d & Reacher & HalfCheetah \\
        \Xhline{1pt}
        SIL & 3646.46 (23.12) & 3967.66 (1.53) & -2.87 (0.09) & 2069.80 (3.44) \\\hline
        Ours & \textbf{3663.91 (20.98)} & \textbf{3982.79 (3.15)} & \textbf{-2.66 (0.03)} & \textbf{2138.89 (3.22)} \\
        \Xhline{1pt}
    \end{tabular}
    \label{tab:self_imitation}
\end{table}

\para{Impact of Other Explanation Methods.}
We investigate the impact of other explanation methods (\ie AIRS~\cite{yu2023airs} and Integrated Gradients~\cite{sundararajan2017axiomatic}) on four Mujoco games. we fix the refining method and use different explanation methods to identify critical steps for refinement. The results are reported in~\autoref{tab:other_explanation}. We observe that using the explanation generated by our mask network, the refining achieves the best outcome across all four applications. Using other explanation methods (Integrated Gradients and AIRS), our framework still achieves better results than the random baseline, suggesting that our framework can work with different explanation method choices.

\begin{table}[t]
    \centering
    \small
    \caption{Performance comparison when using different explanation methods across four MuJoCo tasks.}
    \begin{tabular}{c?c|c|c|c}
        \Xhline{1pt}
        Task & Random Explanation & Integrated Gradients & AIRS & Ours \\
        \Xhline{1pt}
        Hopper & 3648.98 (39.06) & 3653.24 (14.23) & 3654.49 (8.12) & \textbf{3663.91 (20.98)} \\\hline
        Walker2d & 3969.64 (6.38) & 3972.15 (4.77) & 3976.35 (2.40) & \textbf{3982.79 (3.15)} \\\hline
        Reacher & -3.11 (0.42) & -2.99 (0.31) & -2.89 (0.19) & \textbf{-2.66 (0.03)} \\\hline
        HalfCheetah & 2132.01 (0.76) & 2132.81 (0.83) & 2133.98 (2.52) & \textbf{2138.89 (3.22)} \\
        \Xhline{1pt}
    \end{tabular} 
    \label{tab:other_explanation}
\end{table}

\para{Sensitivty of $p$ and $\lambda$ in Hopper game with an imitated PPO agent.} We report the sensitivity of hyper-parameters $p$ and $\lambda$ in Hopper game with an imitated PPO agent in \autoref{fig:sensitivity_lambda_ppohopper}. We observe that in general, a mixture probability of $p=0.25$ or $p=0.5$ is a better choice. An RND bonus can facilitate the agent with faster refinement.


\begin{figure*}[htbp]
\centering
\includegraphics[width=\textwidth]{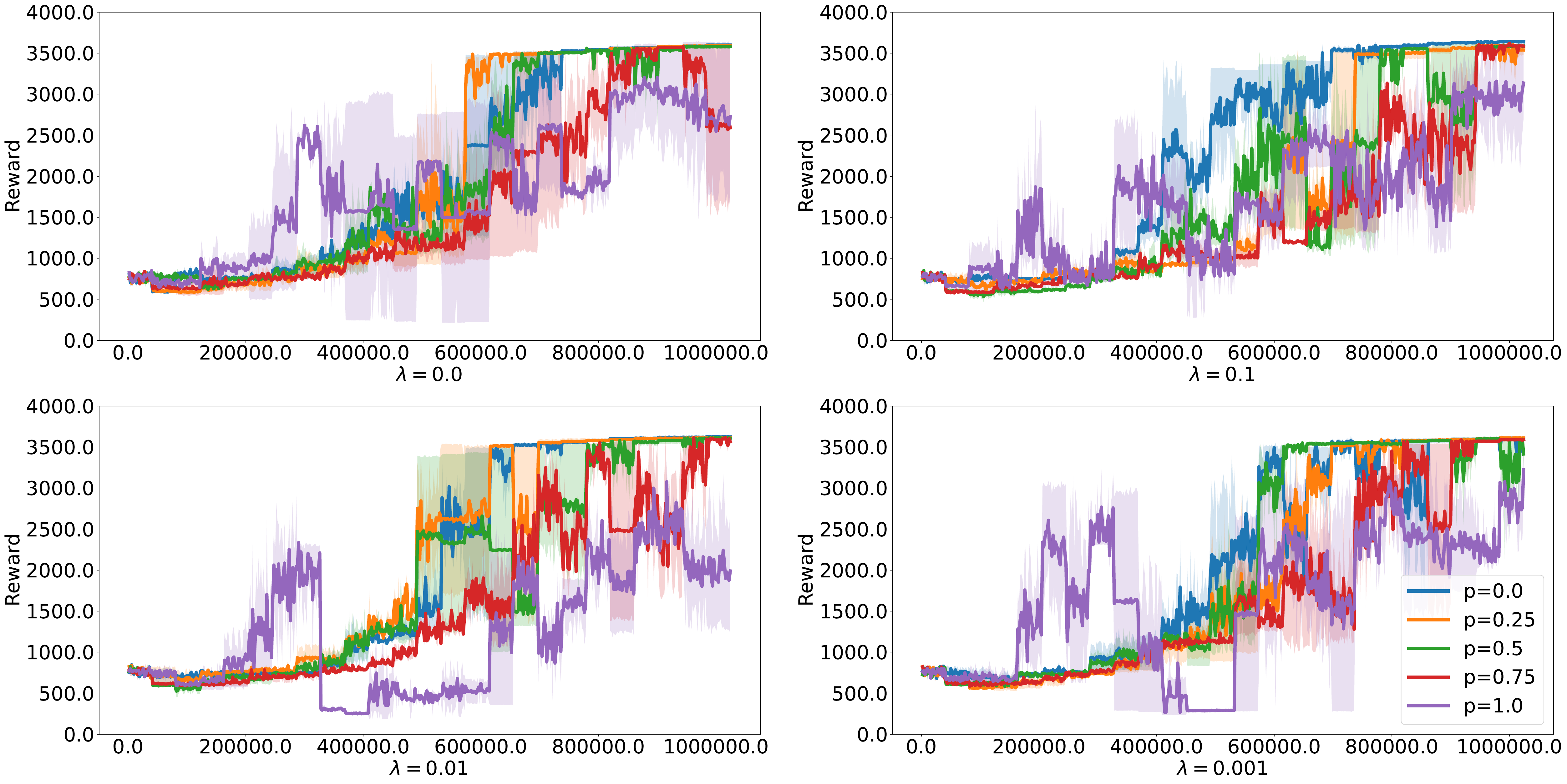}
\caption{Sensitivity results of hyper-parameters $p$ and $\lambda$ in Hopper game with an imitated PPO agent.  We vary the hyper-parameter $\lambda$ from $\{0, 0.1, 0.01, 0.001\}$ and record the performance of the agent after refining. A smaller choice of $\lambda$ means a smaller reward bonus for exploration.}
\label{fig:sensitivity_lambda_ppohopper}
\end{figure*}

\para{Sensitivity of Hyper-parameters $p$ and $\lambda$.} We provide the sensitivity results of $p$ in all applications in \autoref{fig:sensitivity_p_2}. We observe that generally a mixture probability of $p=0.25$ or $p=0.5$ is a good choice. Additionally, 
recall that we need to use the hyper-parameter $\lambda$ to balance the scale of the ``true'' environment reward and the exploration bonus. We test the sensitivity of $\lambda$ from the space \{0.1, 0.01, 0.001\}. \autoref{fig:sensitivity_lambda} reports the agent's performance after refining under different settings of $\lambda$. We observe that our retaining method is insensitive to the choice of $\lambda$. The agent's performance does not vary a lot with different settings of $\lambda$. But $\lambda=0.01$ gives the best performance in all applications except selfish mining.

\begin{figure*}[htbp]
\centering
\includegraphics[width=\textwidth]{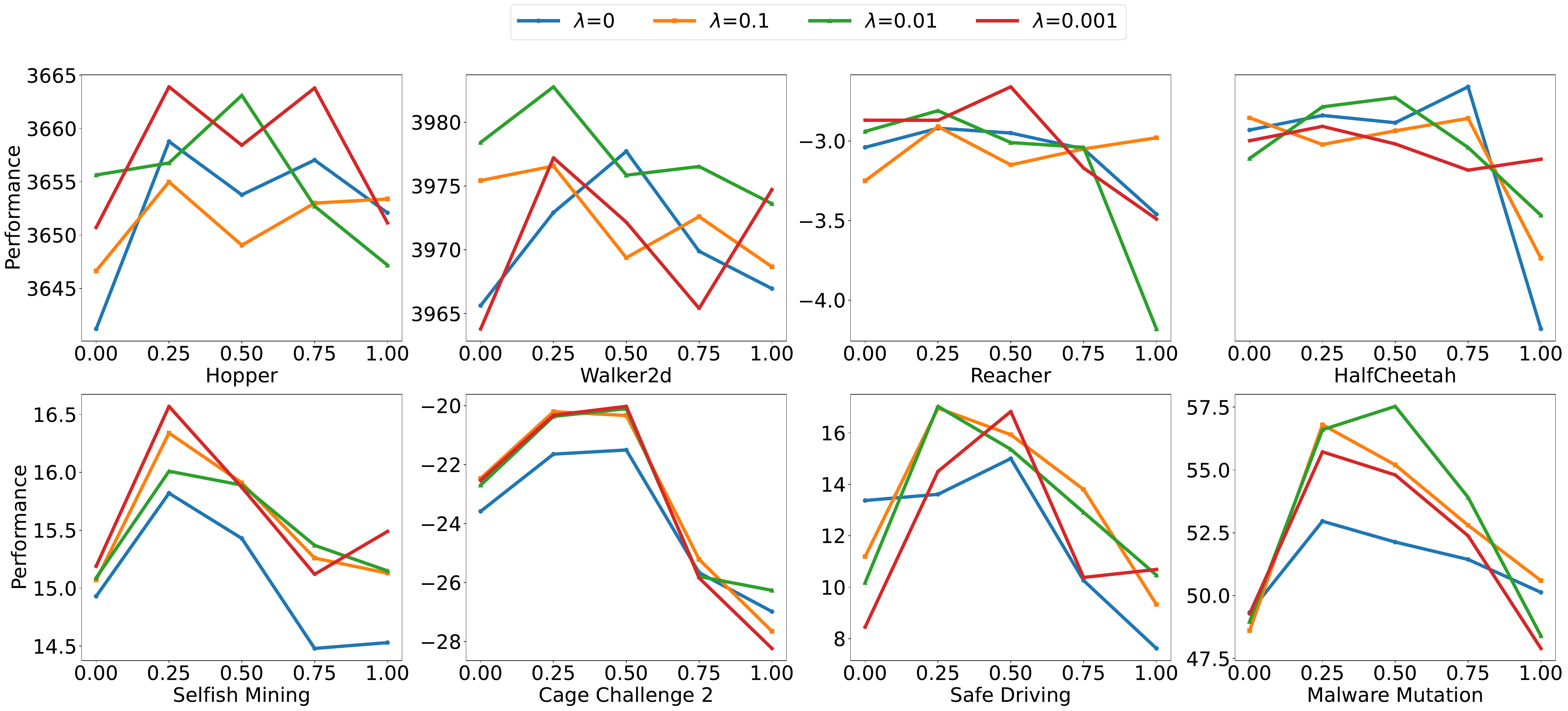}
\caption{Sensitivity results of hyper-parameter $p$ in all applications. We vary the hyper-parameter $p$ from \{0, 0.25, 0.5, 0.75, 1\} under different $\lambda$, and record the performance of the agent after refining. When $p=0$, refining starts from the default initial states of the environment. When $p=1$, refining starts exclusively from critical states. We show that the ``mixed'' initial state distribution helps to achieve a better performance.}
\label{fig:sensitivity_p_2}
\end{figure*}

\begin{figure*}[htbp]
\centering
\includegraphics[width=\textwidth]{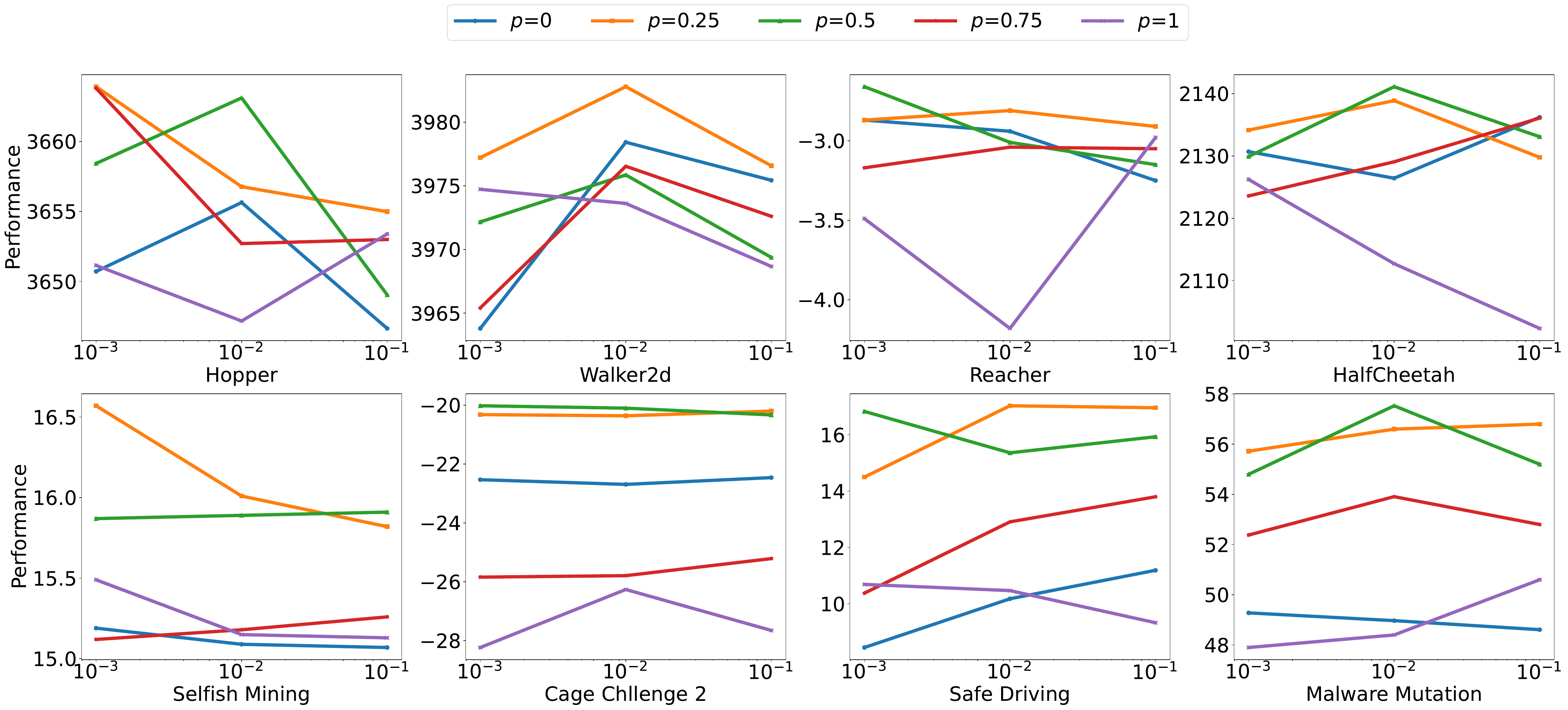}
\caption{Sensitivity results of hyper-parameter $\lambda$.  We vary the hyper-parameter $\lambda$ from $\{0.1, 0.01, 0.001\}$ and record the performance of the agent after refining. A smaller choice of $\lambda$ means a smaller reward bonus for exploration.}
\label{fig:sensitivity_lambda}
\end{figure*}

\para{Sensitivity of $\alpha$.} Recall that under certain assumptions, we are able to simplify the design of StateMask. We propose an intrinsic reward mechanism to encourage the mask network to blind more states without sacrificing performance. The hyper-parameter $\alpha$ is then introduced to balance the performance of the perturbed agent and the need for encouraging blinding. We test the sensitivity of $\alpha$ from the space $\{0.01, 0.001, 0.0001\}$ and report the fidelity scores under different settings of $\alpha$ in \autoref{fig:sensitivity_alpha}. We observe that though the value of $\alpha$ varies, the fidelity score does not change much. 

\begin{figure}[htp]
\centering
\includegraphics[width=0.9\textwidth]{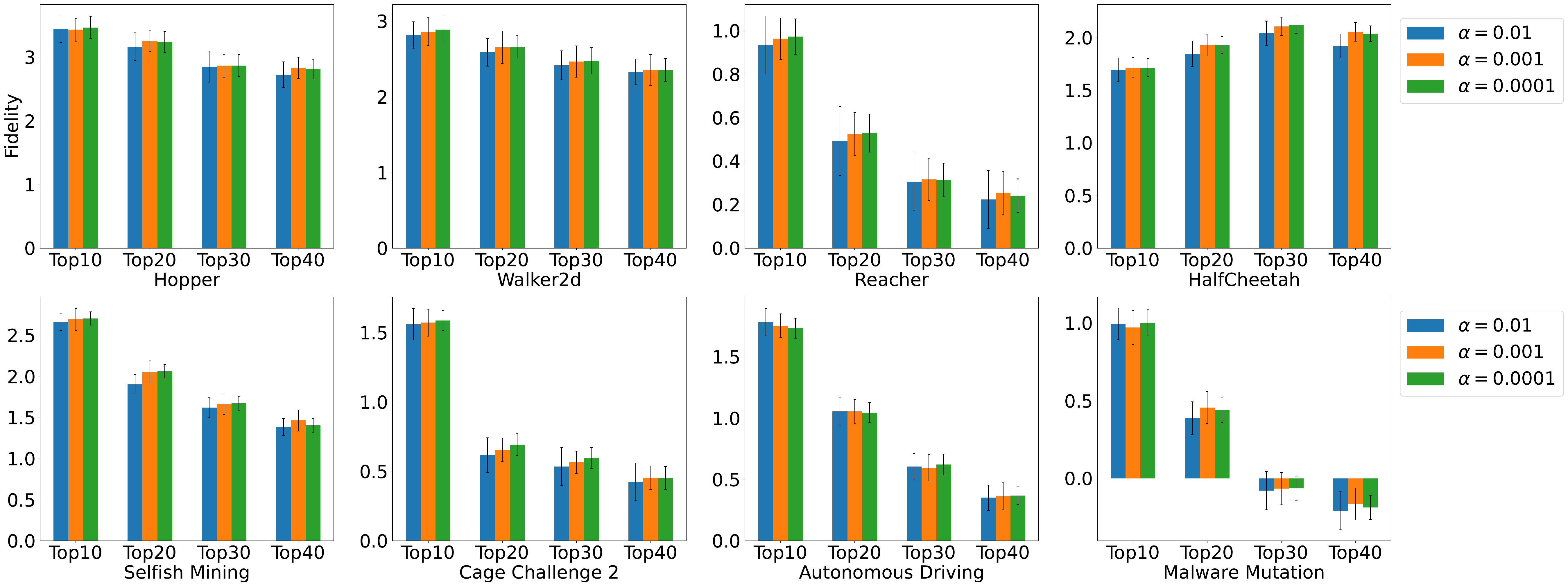}
\caption{Sensitivity results of hyper-parameter $\alpha$.  We vary the hyper-parameter $\alpha$ from $\{0.01, 0.001, 0.0001\}$ and record the fidelity scores of the mask network trained under different settings of $\alpha$. A higher fidelity score means a higher fidelity.}
\label{fig:sensitivity_alpha}
\end{figure}

\subsection{Evaluation Results of MuJoCo Games with Sparse Rewards}
\label{appendix:sparse_games}

\para{Results of SparseWalker2d.} First, we compare our refining method with other baseline methods (\ie PPO fine-tuning, StateMask-R, and JSRL) in the SparseWalker2d game. \sref{Figure}{fig:sparse_halfcheetah} shows that our refining method is able to help the DRL agent break through the bottleneck with the highest efficiency compared with other baseline refining methods. Additionally, by replacing our explanation method with a random explanation, we observe that the refining performance is getting worse.

\begin{figure*}[t]
\centering
\includegraphics[width=\textwidth]{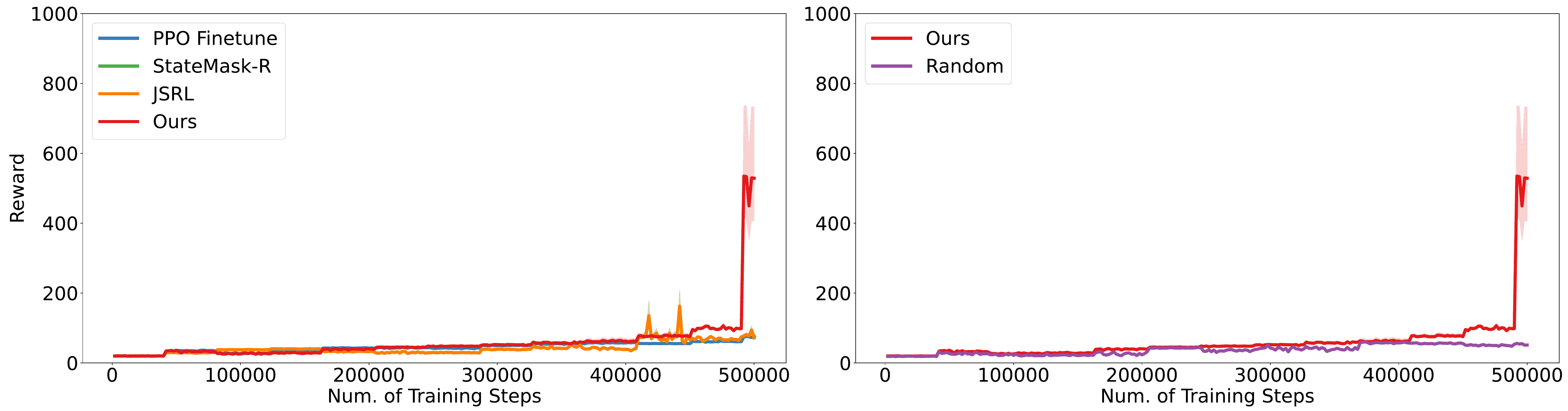}
\caption{Agent Refining Performance in the SparseWalker2d Games.
For the left figure, we fix the explanation method to our method (mask network) if needed while varying refining methods. For the right figure, we fix the refining method to our method while varying the explanation methods. }
\label{fig:sparse_halfcheetah}
\end{figure*}

\para{Sensitivity of $p$ and $\lambda$.} We report the sensitivity of hyper-parameters $p$ and $\lambda$ in the three MuJoCo games with sparse rewards in~\autoref{fig:sensitivity_lambda_sparse_hopper},~\autoref{fig:sensitivity_lambda_sparse_walker}, and~\autoref{fig:sensitivity_lambda_sparse_halfcheetah}. We have the following observations: First, generally, a mixed probability $p$ within the range of 0.25 and 0.5 would be a good choice. Second, the refining benefits from the exploration bonus in the sparse MuJoCo games. Third, PPO fine-tuning cannot guarantee that the refined agent can achieve a good performance. Especially in SparseWalker2d game, we observe that ppo fine-tuning cannot break through the training bottleneck of the DRL agent.


\begin{figure*}[htbp]
\centering
\includegraphics[width=\textwidth]{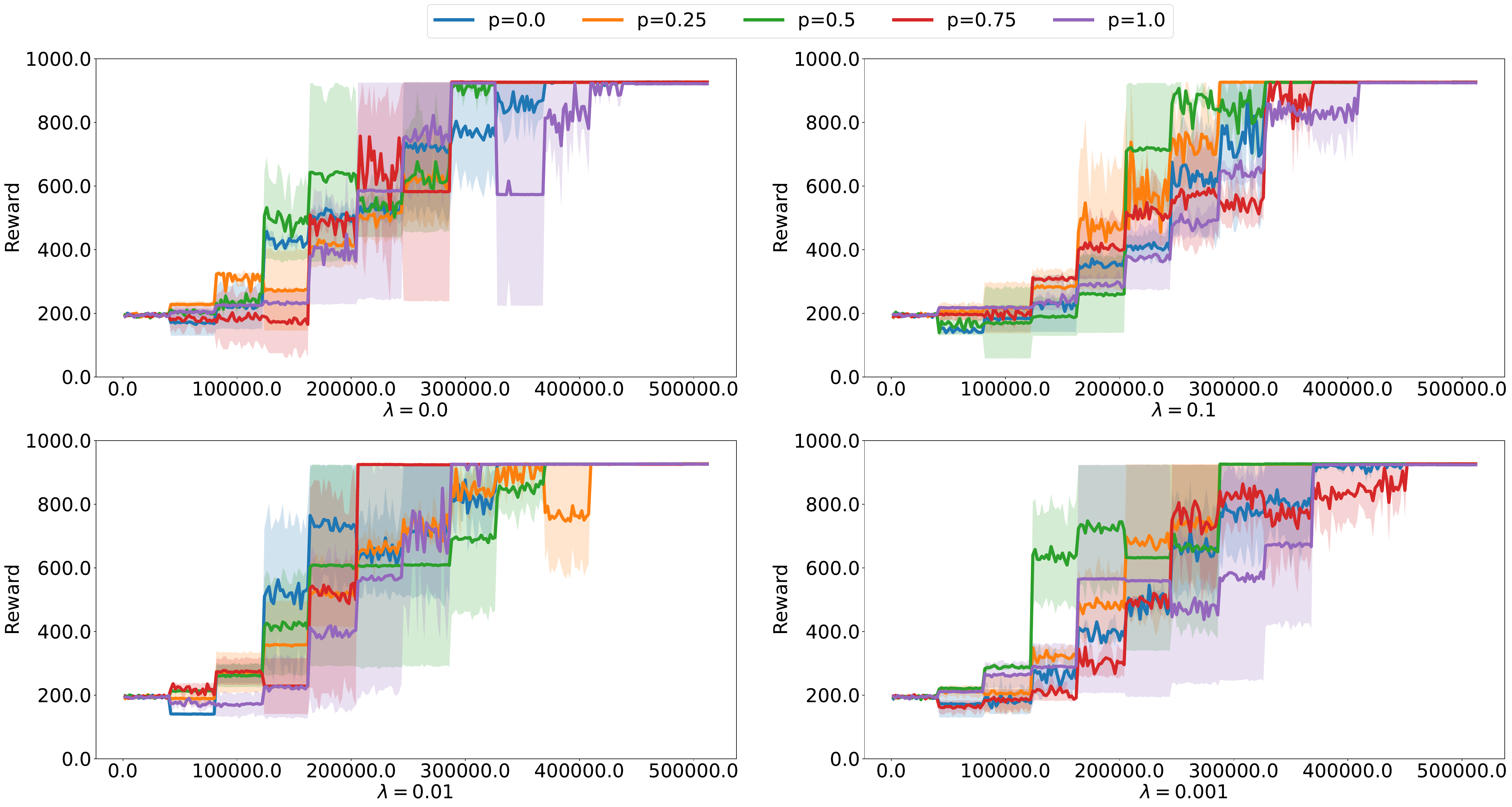}
\caption{Sensitivity results of hyper-parameter $\lambda$ in SparseHopper game.  We vary the hyper-parameter $\lambda$ from $\{0, 0.1, 0.01, 0.001\}$ and record the performance of the agent after refining. A smaller choice of $\lambda$ means a smaller reward bonus for exploration.}
\label{fig:sensitivity_lambda_sparse_hopper}
\end{figure*}


\begin{figure*}[htbp]
\centering
\includegraphics[width=\textwidth]{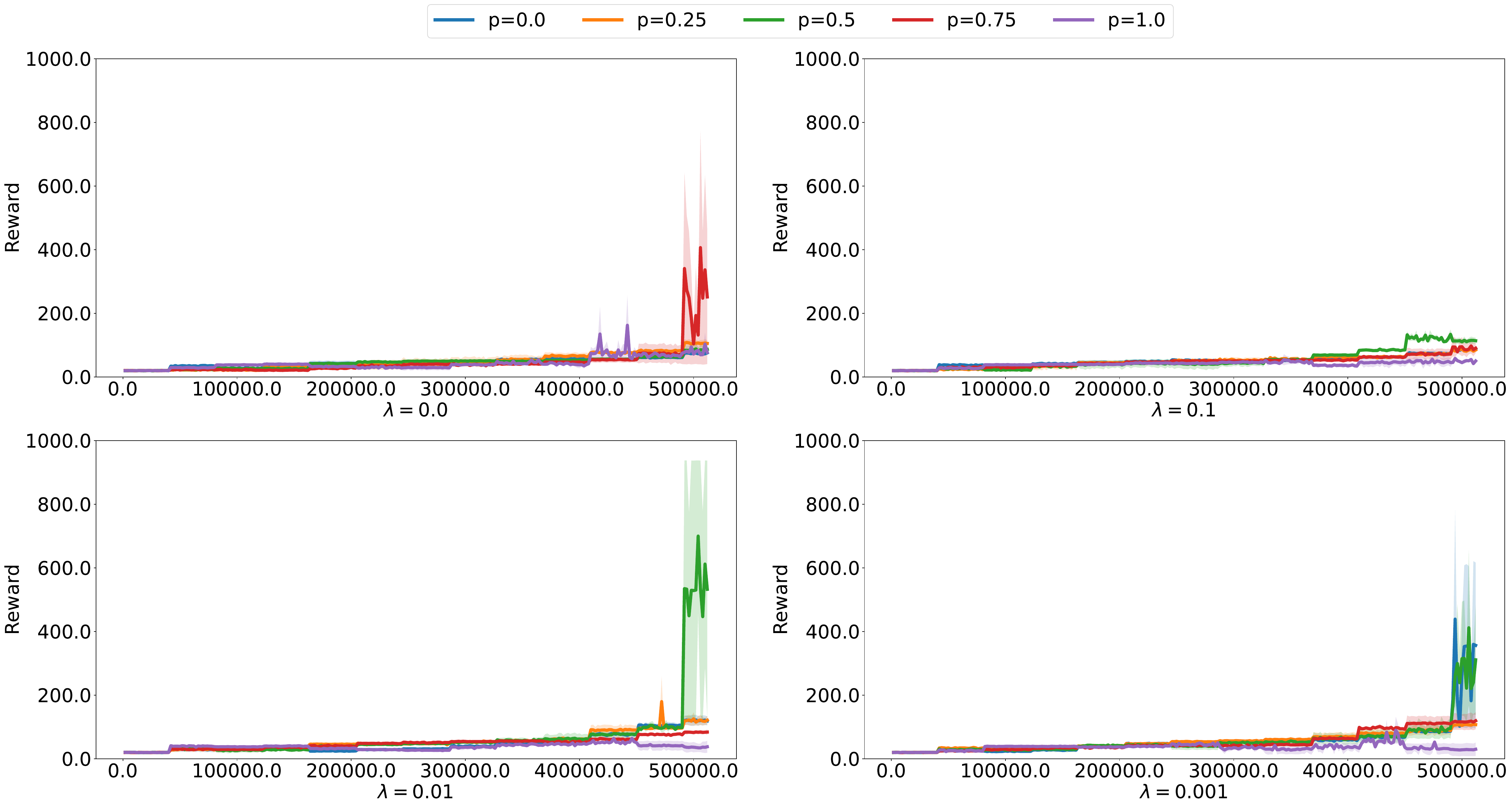}
\caption{Sensitivity results of hyper-parameter $\lambda$ in SparseWalker2d game.  We vary the hyper-parameter $\lambda$ from $\{0, 0.1, 0.01, 0.001\}$ and record the performance of the agent after refining. A smaller choice of $\lambda$ means a smaller reward bonus for exploration.}
\label{fig:sensitivity_lambda_sparse_walker}
\end{figure*}

\begin{figure*}[htbp]
\centering
\includegraphics[width=\textwidth]{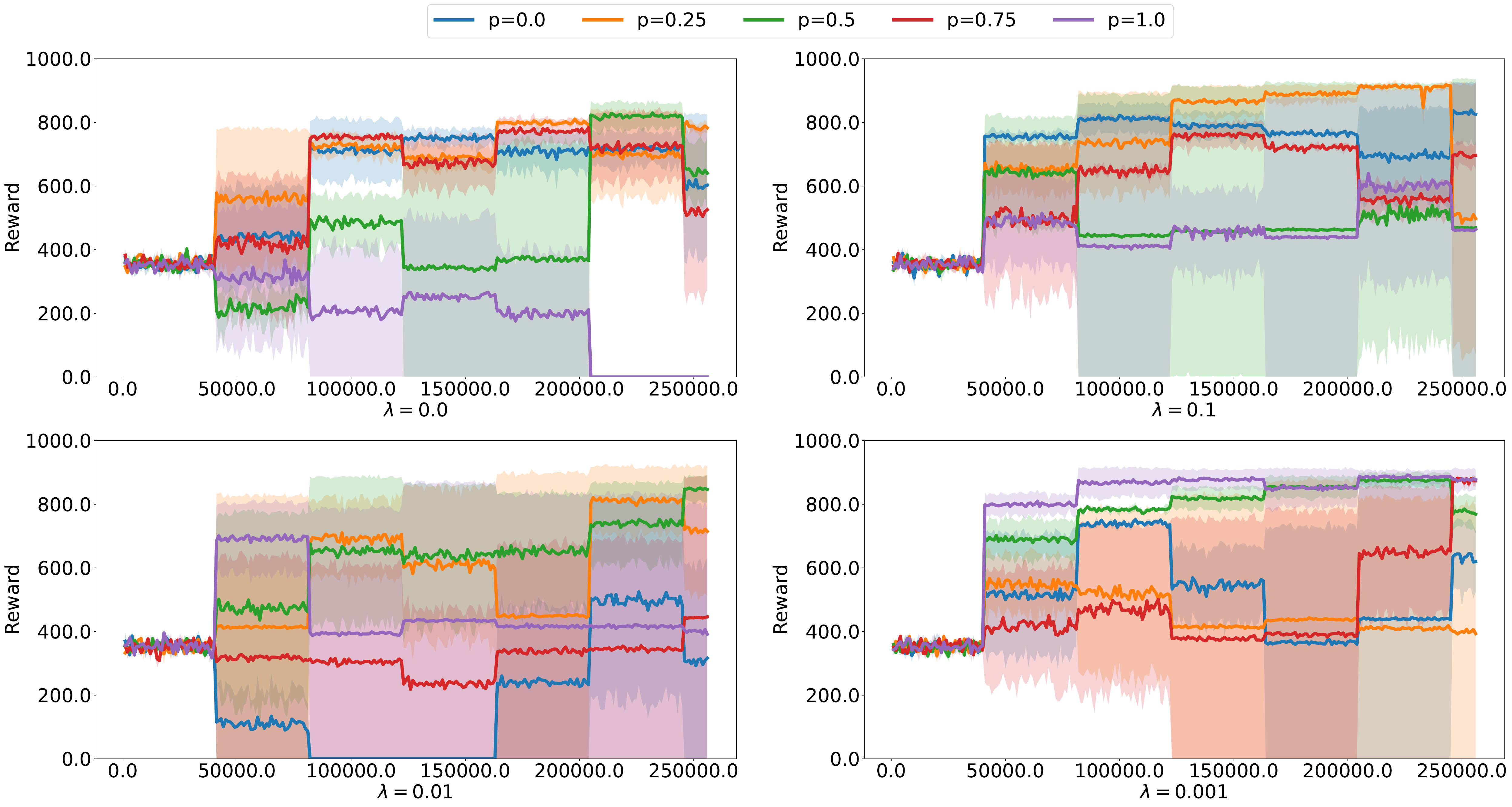}
\caption{Sensitivity results of hyper-parameter $\lambda$ in SparseHalfCheetah game.  We vary the hyper-parameter $\lambda$ from $\{0, 0.1, 0.01, 0.001\}$ and record the performance of the agent after refining. A smaller choice of $\lambda$ means a smaller reward bonus for exploration.}
\label{fig:sensitivity_lambda_sparse_halfcheetah}
\end{figure*}

\subsection{Qualitative Analysis}
\label{appendix:qualitative}

\begin{figure}[t]
    \centering
    \includegraphics[width=\textwidth]{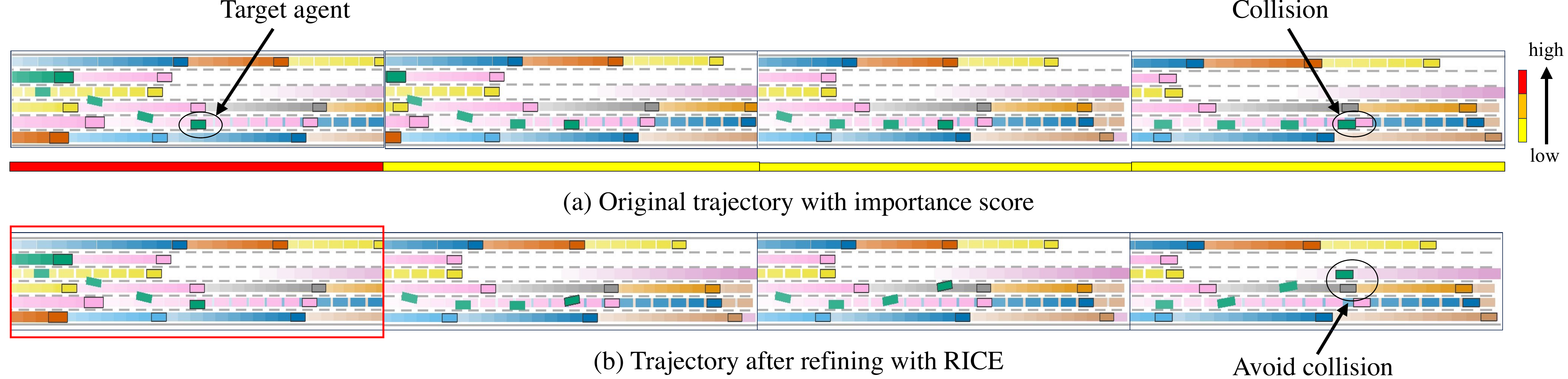}
    \caption{(a): In the original trajectory, the target agent (the green car) eventually collides with the pink car, which is an undesired outcome. Each time step is marked with a different color: ``yellow'' indicates the least important step and “red” represents the most important step. (b): We highlight the critical states identified by our explanation method and the corresponding outcome after refining. Using our explanation method, the target agent (the green car) successfully avoids collision.}
    \label{fig:qualitative}
\end{figure}

We do a qualitative analysis of the autonomous driving case to understand how {\tt RICE} impacts agent behavior and performance. We visualize the agent's behavior before and after refining the agent. \autoref{fig:qualitative}(a) shows a trajectory wherein the target agent (depicted by the green car) fails to reach its destination due to a collision with a pink car on the road. Given the undesired outcome, we use our method to identify the critical steps that contribute to the final (undesired) outcome. The important steps are highlighted in red color. Our method identifies the important step as the one when the green car switches across two lanes into the lane of the pink car. The critical state is reasonable because this early step allows the green car to switch lanes to avoid the collision. Based on the provided explanation, we apply our refining method to improve the target agent. The trajectory after refining is shown in \autoref{fig:qualitative}(b). It shows that after refining, the refined agent (the green car) successfully identifies an alternative path to reach the destination while avoiding collision.

\section{Case Study: Malware Mutation}
\label{appendix:malware}

\subsection{Design Intuitions}
\begin{table*}[t]
\centering
\caption{
{\bf Malware Mutation Case Study}---%
{\small
We evaluate the evasion probability of the agent under different settings and count the corresponding action frequencies.}
}
\resizebox{\textwidth}{!}{ 
\begin{tabular}{l|l|l|c}
\Xhline{1pt}
{\bf Refine Setting}                                                & {\bf Test Setting}                      & {\bf Action Frequency}                             & {\bf Evasion} \\ \Xhline{1pt}
Original agent w/o refinement                                      & From default initial S & \{$A_4$: 4,914, $A_9$: 5\}                        & 33.8\%              \\ \hline
Continue training                                              & From default initial S & \{$A_4$: 2,590, $A_7$: 55, $A_1$: 99, $A_9$: 95\}     & 38.8\%              \\ \hline
\multirow{2}{*}{Refine from critical states}               & From critical states        & \{$A_{12}$: 2,546, $A_5$: 138, $A_4$: 32, $A_9$: 8\}    & 50.8\%              \\ \cline{2-4} 
                                                               & From default initial S & \{$A_{12}$: 4,728, $A_5$: 62\}                      & 36.2\%              \\ \hline
Refine from mixed initial state dist.               & From default initial S & \{$A_4$: 1,563, $A_{12}$: 1,135, $A_5$: 332, $A_6$: 12\} & 58.4\%              \\ \hline
Refine from mixed initial state dist. + exploration & From default initial S & \{$A_5$: 2,448, $A_7$: 165, $A_{12}$: 138, $A_4$: 6\}   & 68.2\%              \\ \Xhline{1pt}
\end{tabular}
}
\label{table:case_malware}
\end{table*}

First, we use malware mutation as a case study to confirm our design intuitions before the proposed refining method.
Recall that the refining method contains three important ideas. 
First, we integrate the explanation result (identified critical step) into the refining process. 
Second, we design a mixed initial state distribution to guide the refining of the target agent. 
Third, we encourage the agent to perform exploration for diverse states during the refining phase. 
In the following, we create multiple baselines by gradually adding these ideas to a naive baseline to show the contribution of each idea. 
We also provide evidence to support our stance against overfitting. 
\autoref{table:case_malware} summarizes the results. 

To start, the original agent is trained for 100 epochs until convergence. We test the target agent for 500 runs, resulting in an average evasion probability of 33.8\%. To extract behavioral patterns, we perform a frequency analysis on the mutation actions taken by the agent across all 500 runs. As shown in the first row of \autoref{table:case_malware}, there is a clear preference for $A_4$ (\ie ``add\_bytes\_to\_section\_cave''). A complete list of the possible actions (16 in total) is shown in \autoref{table:malconv_action} (Appendix).



\para{Continue Learning w/o Explanation.}
The most common refining method is to lower the learning rate and continue training. We continue to train this target agent using the PPO algorithm for an additional 30 epochs and evaluate its performance over 500 runs. This yields an average evasion probability of 38.8\% (second row in \autoref{table:case_malware}).  It is worth noting that $A_4$ (\ie ``add\_bytes\_to\_section\_cave'') remains the most frequently selected action.

\para{Leverage Explanation Results for Refining.}
Subsequently, we assess the refining outcome by incorporating our explanation result into the refining process. Specifically, we initiate the refining exclusively {\em from the critical steps} identified by the explanation method. For this setting, we do not perform exploration.  

During the test phase, we explore two testing settings. First, we artificially reset the test environment to start from these critical steps. We find that evasion probability surges to 50.8\%. $A_{12}$ (\ie ``modify\_timestamp'') becomes as the most frequently chosen action. This indicates the refined agent learns a policy when encountering the critical state again. However, for more realistic testing, we need to set the test environment to the default initial state (\ie the correct testing condition). Under this setting, we find the evasion probability diminishes to 36.2\%. This stark contrast in results shows evidence of overfitting. The refined agent excels at solving the problem when starting from critical steps but falters when encountering the task from default initial states.



\para{Impact of Mixed Initial State Distribution.} 
Given the above result, we further build a baseline by refining from the proposed mixed initial state distribution (\ie blending the default initial state distribution with the critical states). For this setting, we also do not perform exploration. Through 500 runs of testing, we observe a notable improvement, with the average evasion probability reaching 58.4\% (from the previous baseline's 36.2\%). Furthermore, the action frequency pattern has also undergone a shift. It combines the preferred actions from the two previous refining strategies, highlighting the frequent selection of both $A_4$ and $A_{12}$.

\para{Impact of Exploration.} Finally, we explore the impact of exploration. This baseline represents the complete version of our proposed system by adding the exploration step and using the mixed initial distribution. We notice that the average evasion probability across 500 runs has a major increase, reaching 68.2\%. The most frequent action now is $A_5$ (\ie ``add\_section\_strings''). $A_4$ and $A_{12}$ are still among the top actions but their frequencies are lowered. This shows the benefits of exploring previously unseen states and diverse mutation paths. In return, the refined agent is able to get out of the local minima to identify more optimal policies.

\subsection{Discovery of Design Flaws}

Additionally, our explanation results have led to the discovery of design flaws in the malware mutation application~\cite{raff2017malware}. We will further explain how we use {\tt RICE} to identify these problems. 


\para{Questions and Intuitions.} 
When using {\tt RICE} to explain the malware mutation process, we observe a scenario where the agent constantly chooses the same action ``upx\_pack'' in multiple consecutive steps. According to the agent, these actions receive a similar reward. However, {\tt RICE} (our mask network) returns different ``explanations'' for these steps (\ie they have highly different importance scores). According to {\tt RICE}, only the first action holds a high importance score, while the other consecutive actions barely have an impact on the final reward (\ie they appear redundant). This raises the question: why does the agent assign a similar reward to these consecutive steps in the first place?  

Another interesting observation is from refining experiments. We find that PPO-based refining cannot yield substantial improvements. While we have expected that these methods do not perform as well as ours (given our exploration step), the difference is still bigger than we initially expected. This motivates us to further examine the reward function design to explore whether it has inadvertently discouraged the DRL agent from finding good evasion paths. 



\para{Problems of Reward Design.} Driven by the intuitions above, we examined the reward design and identified two problems.  Firstly, the reward mechanism is inherently {\em non-Markovian} which deviates from the expectation of a typical reinforcement learning (RL) framework. In typical RL settings, rewards are contingent on the current state $s$ and the next state $s'$. However, the current design computes the reward based on the {\em initial state} $s_0$ and the subsequent state $s'$. Consequently, this may assign an identical reward for the same action (\eg ``upx\_pack'') in consecutive steps. This non-Markovian nature of the reward mechanism can mislead the DRL agent and hurt its performance. 

Second, we find that the {\em intermediate rewards} exhibit unusually high sparsity, \ie many intermediate rewards tend to have a value close to zero, which poses a significant challenge for the PPO algorithm to learn a good policy based on such intermediate rewards. Agents refined with these methods can be easily trapped in local minima. 


\para{Fixing the Problematic Reward Design.} Based on these insights, we fix the bugs in the reward design with two simple steps: (1) We make the reward function Markovian, which depends only on the current state and the next state. (2) We perform scaling on the intermediate reward with a coefficient of 3. After that, we re-run an experiment to evaluate the correctness of our modifications. We train a DRL agent for 100 epochs with the same parameters under the new reward design and test its performance over 3 trials of 500 runs. The experiment shows that the evasion probability of the agent under the new reward design jumps from 42.2\% (using the old reward function, see \autoref{table:retrain_method}) to 72.0\%, which further confirms our intuitions. This case study illustrates how developers can use {\tt RICE} to debug their system and improve their designs.

\section{Limitation}
\label{appendix:limitation}
We use the continuous ``Mountain Car'' environment~\cite{mountaincar} as a negative control task to illustrate a scenario where {\tt RICE} does not work well. In this “extreme” case, \sref{Assumption}{assumption:mismatch} is completely broken since the state coverage of the pre-trained agent is limited to a small range around the initial point. In this experiment, we train a target agent using Proximal Policy Optimization (PPO) for 1 million steps. The results show that the policy performance remained poor, with the agent frequently getting trapped at the starting point of the environment. In such cases where the original policy fails to learn an effective strategy, the role of explanations becomes highly limited. Since {\tt RICE} relies on the identified critical states to enhance the policy, if the policy itself is extremely weak (\ie not satisfying \sref{Assumption}{assumption:mismatch}), then the explanations will not be meaningful, which further huts the refinement. In the case of the Mountain Car experiment, {\tt RICE} essentially reduces to being equivalent to Random Network Distillation (RND) due to the lack of meaningful explanation. We show the result when refining the pre-trained agent using our method and RND in \autoref{fig:mountain_car}.

\begin{figure*}[t]
\centering
\includegraphics[width=0.6\textwidth]{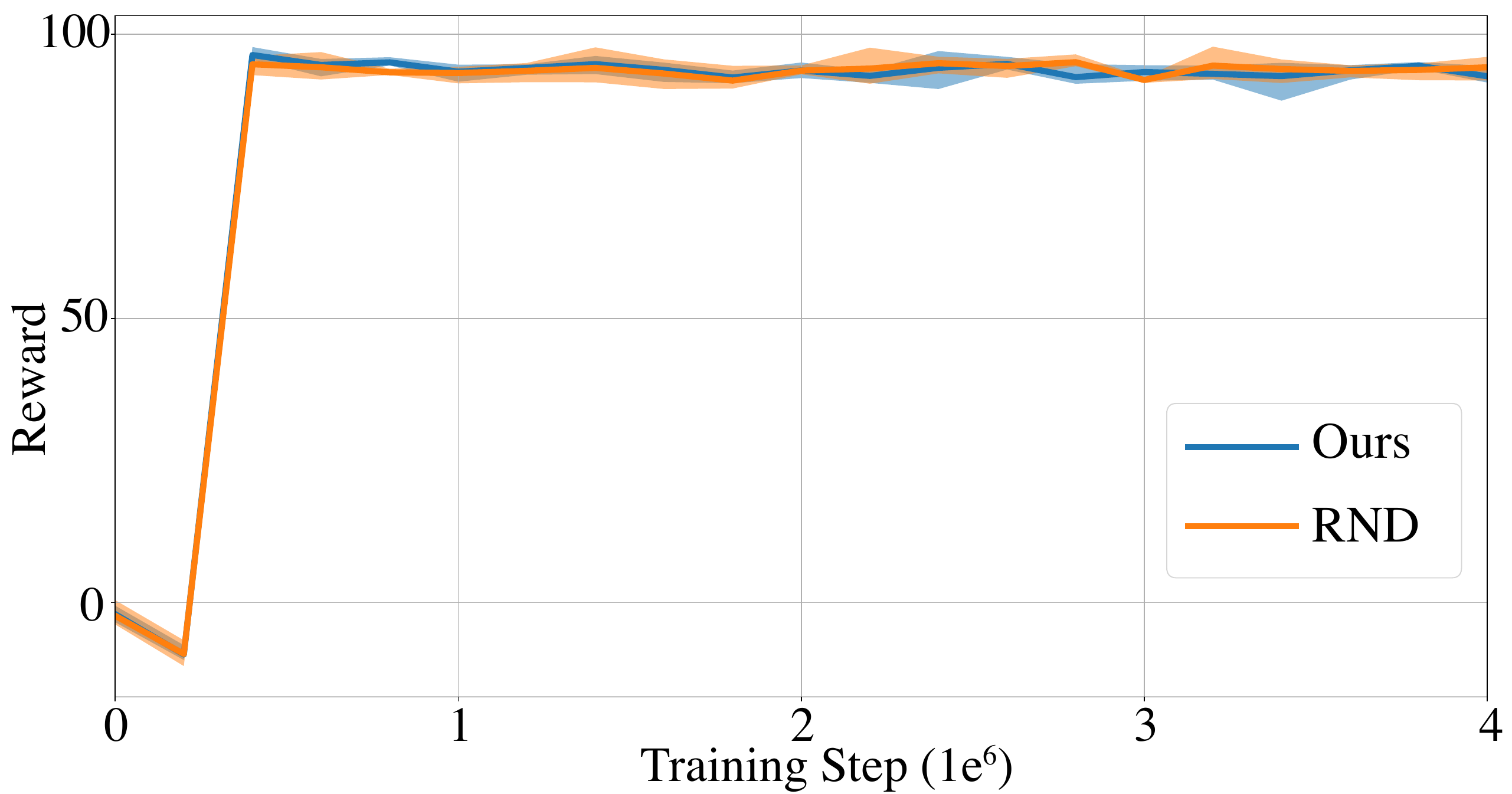}
\caption{Refining performance with our method and RND method in MountainCarContinuous-v0 game. The state coverage of the pre-trained policy is limited to a small range around the initial point.}
\label{fig:mountain_car}
\end{figure*}

\end{document}